\newtheorem{lemma}{Lemma}
\newcommand{\e}{\begin{equation}}
\newcommand{\ee}{\end{equation}}
\newcommand{\en}{\begin{equation*}}
\newcommand{\een}{\end{equation*}}
\newcommand{\eqn}{\begin{eqnarray}}
\newcommand{\eeqn}{\end{eqnarray}}
\newcommand{\bmat}{\begin{bmatrix}}
	\newcommand{\emat}{\end{bmatrix}}
\newcommand{\BIT}{\begin{itemize}}
	\newcommand{\EIT}{\end{itemize}}
\newcommand{\eg}{{\it e.g.}}
\newcommand{\ie}{{\it i.e.}}
\newcounter{oursection}
\journal{Signal Processing}
\begin{document}

\begin{frontmatter}



\title{An Efficient Method for Robust Projection Matrix Design}


\author[TH]{Tao Hong}
\ead{hongtao@cs.technion.ac.il}
\author[ZZ]{Zhihui Zhu}
\ead{zzhu@mines.edu}

	\address[TH]{Department
		of Computer Science, Technion - Israel Institute of Technology, Haifa, 32000, Israel.}
	\address[ZZ]{Department of Electrical Engineering,
		Colorado School of Mines, Golden, CO 80401 USA.} 


\begin{abstract}
Our objective is to efficiently design a robust projection matrix $\bm \Phi$ for the Compressive Sensing (CS) systems when applied to the signals that are not exactly sparse. The optimal projection matrix is obtained by mainly minimizing the average coherence of the equivalent dictionary. In order to drop the requirement of the sparse representation error (SRE) for a set of training data as in \cite{LLLBJH15} \cite{HBLZ16}, we introduce a novel penalty function independent of a particular SRE matrix. Without requiring of training data, we can efficiently design the robust projection matrix and apply it for most of CS systems, like a CS system for image processing with a conventional wavelet dictionary in which the SRE matrix is generally not available.  Simulation results demonstrate the efficiency and effectiveness of the proposed approach compared with the state-of-the-art methods. In addition, we experimentally demonstrate with natural images that under similar compression rate, a CS system with a learned  dictionary in high dimensions outperforms the one in low dimensions in terms of reconstruction accuracy. This together with the fact that our proposed method can efficiently work in high dimension suggests that a CS system can be potentially implemented beyond the small patches in sparsity-based image processing.
\end{abstract}

\begin{keyword}
Robust projection matrix \sep sparse representation error (SRE) \sep high dimensional dictionary \sep mutual coherence.


\end{keyword}

\end{frontmatter}

\section{Introduction}\label{S_1}
Since the beginning of this century, Compressive Sensing or Compressed Sensing (CS) has received a great deal of attention \cite{CRT06} - \cite{EK12}. Generally speaking, CS is a mathematical framework that addresses accurate recovery of a signal vector $\bm x\in \Re^{N}$ from a set of linear measurements
\e
\bm y = \bm \Phi\bm x \in \Re^{M}\label{e_1}
\ee
where $M\ll N$ and  $\bm \Phi\in \Re^{M\times N}$ is referred to as the projection or sensing matrix. CS has found many applications in the areas such as image processing, machine learning, pattern recognition,  signal detection/classification etc. We refer the reader to \cite{E10} \cite{EK12} and the references therein to find the related topics mentioned above.

Sparsity and coherence are two important concepts in CS theory.  We say a signal $\bm x$ of interest  approximately sparse (in some basis or dictionary)  if we can approximately express it as a linear combination of few columns (also called atoms) from a well-chosen dictionary:
\e
\bm x=\bm \Psi \bm \theta+\bm e\label{e_2}
\ee
where $\bm\Psi\in \Re^{N\times L}$ is the given or determined dictionary, $\bm \theta\in \Re^{L}$ is a sparse coefficient vector with few non-zero elements, and $\bm e\in \Re^{N}$ stands for the sparse representation error (SRE). In particular, the vector $\bm x$  is called (purely or exactly) $K$-sparse in $\bm \Psi$ if $\|\bm \theta\|_0 = K$ and $\bm e=\bm 0$ and approximately $K$-sparse in $\bm \Psi$ if $\|\bm \theta\|_0 = K$ and $\bm e$ has relatively small energy. Here, $\|\bm \theta\|_0$ denotes the number of non-zero elements in $\bm \theta$ and $\bm 0$ represents a vector whose entries are equivalent to $0$ . Through this paper, we say $\bm \theta$ is $K$-sparse if $\|\bm \theta\|_0 = K$ regardless whether $\bm e=\bm 0$.


Substituting the sparse model \eqref{e_2} of $\bm x$ into \eqref{e_1} gives
\e
\bm y=\bm\Phi\bm\Psi\bm \theta+\bm\Phi\bm e\triangleq \bm D\bm \theta+\bm\Phi\bm e\label{e_3}
\ee
where the matrix $\bm D=\bm\Phi\bm\Psi$ is referred to as the equivalent dictionary of the CS system and $\bm\epsilon\triangleq\bm \Phi\bm e$ denotes the projection noise caused by SRE. The goal of a CS system is to retrieve $\bm \theta$ (and hence $\bm x$) from the measurements $\bm y$. Due to the fact that $M\ll L$, solving $\bm y\approx\bm D\bm \theta$ for $\bm \theta$ is an undetermined problem which has an infinite number of solutions. By utilizing the priori knowledge that $\bm \theta$ is sparse, a CS system typically attempts to recover $\bm \theta$ by solving the following problem:
\e
{\bm \theta} = \arg\min\limits_{\tilde{\bm \theta}}\|\tilde{\bm \theta}\|_0, ~\text{s.t.}~\|\bm y-\bm D\tilde{\bm\theta}\|_2\leq \|\bm \epsilon\|_2\label{sparse_recov}
\ee
which can be solved by many efficient numerical algorithms including basis pursuit (BP), orthogonal matching pursuit (OMP), least absolute shrinkage and selection operator (LASSO) etc. All of the methods can be found in \cite{E10} \cite{ZE10} and the references therein.

To ensure exact recovery of $\bm \theta$ through \eqref{sparse_recov}, we need certain conditions on the equivalent dictionary $\bm D$. One of such conditions  is related to the concept of \emph{mutual coherence}. The mutual coherence of a matrix $\bm D\in \Re^{M\times L}$ is denoted by
\e
\mu(\bm D)\triangleq \max\limits_{1\leq i\neq j\leq L} |\bar{\bm G}(i,j)|\label{e_4}
\ee
where $\bar{\bm G}=\bar{\bm D}^\mathcal T\bar{\bm D}$ is called the Gram matrix of $\bar{\bm D}=\bm D\bm S_{sc}$ with $\bm S_{sc}$ a diagonal scaling matrix such that each column of $\bar{\bm D}$ is of unit length. Here $^\mathcal T$ represents the transpose operator. It is known that $\mu(\bm D)$ is lower bounded by the Welch bound $\underline{\mu}(\bm D)=\sqrt{\frac{L-M}{M(L-1)}}$, \ie, $\mu(\bm D)\in \left[\sqrt{\frac{L-M}{M(L-1)}},1\right]$. The mutual coherence $\mu(\bm D)$  measures the worst-case coherence between any two columns of $\bm D$ and is one of the fundamental quantities associated with the CS theory. As shown in \cite{E10}, when there is no projection noise (\ie, $\bm \epsilon = 0$), any $K$-sparse signal $\bm \theta$ can be exactly recovered  by solving the linear system \eqref{sparse_recov} as long as
\e
K<\frac{1}{2}\left[1+\frac{1}{\mu(\bm D)}\right]\label{e_5}
\ee
which indicates that a smaller $\mu(\bm D)$ ensures a CS system to recover the signal with a larger $K$. Thus, \cite{Barchiesi13} \cite{LiICASSP2017} proposed methods to design a dictionary with small mutual coherence. For a given dictionary $\bm\Psi$, the mutual coherence of the equivalent dictionary is actually determined or controlled by the projection matrix $\bm \Phi$. So it would be of great interest to design $\bm\Phi$ such that $\mu(\bm D)$ is minimized. Another similar indicator used to evaluate the average performance of a CS system is named \emph{average} mutual coherence $\mu_{av}$. The definition of $\mu_{av}$ is given as follows:
\en
\mu_{av}(\bm D)\triangleq \frac{\sum_{\forall (i,j)\in S_{av}}|\bar{\bm G}(i,j)|}{N_{av}}
\een
where $S_{av}\triangleq\{(i,j):{\bar \mu}\leq|\bar{\bm G}(i,j)|\}$ with $0\leq\bar{\mu}<1$ as a prescribed parameter and $N_{av}$ is the number of components in the index set $S_{av}$.

There has been much effort \cite{E07} - \cite{LZYCB13} devoted to designing an optimal $\bm\Phi$ that outperforms the widely used random matrix in terms of signal recovery accuracy (SRA).
However, all these methods are based on the assumption that the signal is exactly sparse under a given dictionary, which is not true for practical applications. It is experimentally observed that the sensing matrix designed by \cite{E07} - \cite{LZYCB13} based on mutual coherence results in inferior performance for real images (which are generally approximately but not exactly sparse under a well-chosen dictionary). To address this issue, the recent work in \cite{LLLBJH15} \cite{HBLZ16} proposed novel methods to design a robust projection matrix when the SRE exists.\footnote{We note that the approaches considered in \cite{LLLBJH15} \cite{HBLZ16} share the same framework. The difference is that in \cite{HBLZ16} the authors utilized an efficient iterative algorithm giving an approximate solution, while a closed form solution is derived in \cite{LLLBJH15}.} Through this paper, similar to what is used in \cite{LLLBJH15} \cite{HBLZ16}, a robust projection (or sensing) matrix means it is designed with consideration of possible SRE and hence the corresponding CS system yields superior performance when the SRE $\bm e$ in \eqref{e_2} is not nil. However, the approaches in \cite{LLLBJH15} \cite{HBLZ16} need the explicit value of the SRE on the training dataset, making them inefficient in several aspects. First, many practical CS systems with predefined analytical dictionaries (\eg, the wavelet dictionary, and the modulated discrete prolate spheroidal sequences (DPSS) dictionary for sampled multiband signals \cite{ZW}) actually do not involve any training dataset and hence no SRE available. In order to design the robust projection matrix for these CS systems using the framework presented in \cite{LLLBJH15} \cite{HBLZ16}, one has to first construct plenty of extra representative dataset for the explicit SRE with the given dictionary, which limits the range of applications. Second, even for the CS system with a dictionary learned typically on a large-scale dataset, we need a lot of memories and computations to store and compute with the huge dataset as well its corresponding SRE for designing a robust sensing matrix.  Moreover, if the CS system is applied to a dynamic dataset, \eg, video stream, it is practically impossible to store all the data and compute its corresponding SRE.  Therefore, the requirement of the explicit value of SRE for the training dataset makes the methods in \cite{LLLBJH15} \cite{HBLZ16} limited and inefficient for all the cases discussed above.

In this paper, to drop the requirement of the training dataset as well as its SRE, we propose a novel robust projection matrix framework only involving a predefined dictionary. With this new framework, we can efficiently design projection matrices for the CS systems mentioned above.
We stress that by efficient method for robust projection matrix design (which is the title of this paper), we are not providing an efficient method for solving the problems in \cite{LLLBJH15} \cite{HBLZ16}; instead we provide a new framework in which the training dataset and its corresponding SRE are not required any more. Experiments on synthetic data and real images demonstrate the proposed sensing matrix yields a comparable performance in terms of SRA compared with the ones obtained by \cite{LLLBJH15} \cite{HBLZ16}. 

Before proceeding, we first briefly introduce some notation used throughout the paper. MATLAB notations are adopted in this paper. In this connection, for a vector, $\bm v(k)$ denotes the $k$-th component of $\bm v$. For a matrix, $\bm Q(i,j)$ means the $(i,j)$-th element of matrix $\bm Q$, while $\bm Q(k,:)$ and $\bm Q(:,k)$ indicate the $k$-th row and column vector of $\bm Q$, respectively. We use $\bm I$ and $\bm I_L$ to denote an identity matrix with arbitrary and $L\times L$ dimension, respectively.  The $k$-th column of $\bm Q$ is also denoted by $\bm q_k$. $\text{trace}(\bm Q)$ denotes the calculation of the trace of $\bm Q$. { The Frobenius norm of a given matrix $\bm Q$ is $\|\bm Q\|_F=\sqrt{\sum_{i,j}\|\bm Q(i,j)\|^2}=\sqrt{\text{trace}(\bm Q^\mathcal T\bm Q)}$ where $\mathcal T$ represents the transpose operator.}
The definition of $l_p$ norm for a vector $\bm v\in \Re^{N}$ is $\|\bm v\|_p\triangleq \left(\sum\limits_{k=1}^N |\bm v(k)|^p\right)^{\frac{1}{p}},~~p\geq 1$.

The remainder is arranged as follows. Some preliminaries are given in Section \ref{S_2} to state the motivation of developing such a novel model. The proposed model which does not need the SRE is shown in Section \ref{S_3} and the corresponding optimal sensing problem is solved in this section. The synthetic and real data experiments are carried out in Section \ref{S_4} to demonstrate the efficiency and effectiveness of the proposed method. Some conclusions are given in Section \ref{S_5} to end this paper. 

\section{Preliminaries}\label{S_2}
A sparsifying dictionary $\bm \Psi$ for a given dataset $\{\bm x_k\}_{k=1}^P$ is usually obtained by considering the following problem
\e
\{\bm \Psi,\bm\theta_k\} =\arg \min\limits_{\tilde{\bm \Psi},\tilde{\bm \theta}_k}\sum\limits_{k=1}^{P} \|\bm x_k-\tilde{\bm \Psi}\tilde{\bm \theta}_k\|_2^2~~\text{s.t.}~~\|\tilde{\bm \theta}_k\|_0\leq K
\label{e_6}
\ee
which can be addressed by some practical algorithms \cite{TF11}, among which the popularly utilized are the K-singular value decomposition (K-SVD) algorithm \cite{AEB06} and the method of optimal direction (MOD) \cite{EAH99}.  As stated in the previous section, the SRE $\bm e_k=\bm x_k-\bm\Psi\bm\theta_k$ is generally not nil. We concatenate all the SRE $\{\bm e_k\}$ into an $N\times P$ matrix:
\en
\bm E\triangleq \bm X-\bm \Psi\bm \Theta
\een
which is referred to as the SRE matrix corresponding to the training dataset $\{\bm x_k\}$ and the learned dictionary $\bm \Psi$.

The recent work in \cite{LLLBJH15} \cite{HBLZ16} attempted to design a robust projection matrix with consideration of  the SRE matrix $\bm E$ and proposed to solve

\e
\bm \Phi=\arg\min\limits_{\tilde{\bm \Phi}}\|\bm I_L-\bm \Psi^\mathcal T\tilde{\bm \Phi}^\mathcal T\tilde{\bm \Phi}\bm\Psi\|_F^2+\lambda\|\tilde{\bm \Phi}\bm E\|_F^2 \label{e_7}
\ee
or
\e
\bm \Phi=\arg\min\limits_{\tilde{\bm \Phi}, \bm G\in H_\xi}\|\bm G-\bm \Psi^\mathcal T\tilde{\bm \Phi}^\mathcal T\tilde{\bm \Phi}\bm\Psi\|_F^2+\lambda\|\tilde{\bm \Phi}\bm E\|_F^2 \label{e_7_2}
\ee
where $H_\xi$ is the set of relaxed equiangular tight frames (ETFs):
\en
 H_\xi=\{\bm G|\bm G=\bm G^\mathcal T,~\bm G(i,i)=1, \forall\ i, \max\limits_{i\neq j}|\bm G(i,j)|\leq \xi  \}.
\een
Compared to \eqref{e_7} which requires the Gram matrix of the equivalent dictionary close to an identity matrix, \eqref{e_7_2} relaxes the requirement of coherence between the equivalent dictionary but is much harder to solve. See \cite{LLLBJH15} \cite{HBLZ16} for more discussions on this issue.

We remark that to ensure the designed sensing matrix by \eqref{e_7} or \eqref{e_7_2}  be robust to the SRE for all the signals of interest, the SRE matrix $\bm E$ should be well representative, \ie, we need sufficient number of training signals $\bm x_k$.   As stated in  \cite{LLLBJH15} \cite{HBLZ16}, these methods (\eqref{e_7} and \eqref{e_7_2}) can be applied naturally when the dictionary is learned by algorithms like K-SVD with plenty of training data $\{\bm x_k\}$, since the SRE $\bm E$ is available without any additional effort. However, this could be prohibitive when the CS system with an analytic dictionary is applied to some arbitrary signals (but still they are approximately sparse in this dictionary), since there are no sufficient number of data available to obtain the SRE matrix $\bm E$. For example, one may only want to apply the CS system to an arbitrary image with the wavelet dictionary. Also, these methods are prohibitive for a dictionary trained on large datasets with millions of training samples and in a dynamic CS system for streaming signals. To train such a dictionary, we have to conduct online algorithms \cite{B98} - \cite{MBPS10} which typically apply stochastic gradient method where in each iteration a randomly selected tiny part of the training signals called mini-batch instead of the whole data is utilized for computing the expected gradient. In these cases, additional efforts are needed to obtain the SRE matrix $\bm E$ and it is usually prohibitive to compute and store $\bm E$ for all the training dataset. All of these situations make the approach proposed in \cite{LLLBJH15} \cite{HBLZ16} become limited.

Aiming to obtain a neural network well expressing the signals of interest, an empirical strategy widely used by deep learning community is to utilize a huge training dataset so that the network can extract more important features and avoid over-fitting. Similar to this phenomenon, a dictionary trained on a huge dataset is also expected to contain more features of the represented signal. Simulation results shown in Section \ref{S_4} demonstrate that a CS system with such a dictionary and designing a projection matrix designed on this dictionary yields a higher reconstruction accuracy on natural images than the one with a dictionary obtained from a small dataset. The recent work in \cite{SOZE16} \cite{SE16} stated that a dictionary learned with larger patches (\eg, $64\times64$) on a huge dataset can better capture features in natural images.\footnote{The dimension of a dictionary in such a case becomes high compared with the moderate dictionary size shown in \cite{AEB06}. In fact, the name of a high dimensional dictionary in this paper means the dictionary obtained by training  on a larger size of represented signal.} In this paper, we also attempt to experimentally investigate the performance of designing a projection matrix on a high dimensional dictionary.  These also motivate us to develop an efficient method for designing a robust sensing matrix without the requirement of the SRE matrix $\bm E$ as it is not easy to obtain for the above two situations. 

In the next section, we provide a novel framework to efficiently design a robust sensing matrix, and more importantly, it can be applied to the situation when the SRE matrix $\bm E$ is not available.

\section{A Novel Approach to Projection Matrix Design}\label{S_3}
In this section, we provide an efficient robust sensing matrix design approach which drops the requirement of training signals and their corresponding SRE matrix $\bm E$. Our proposed framework is actually inspired by \eqref{e_7} and \eqref{e_7_2} from the following two aspects.

\subsection{A Novel Framework for Robust Projection Matrix Design}
First note that the energy of SRE $\|\bm E\|_F^2$ is usually very small since the learned sparsifying dictionary is assumed to sparsely represent a signal well as in \eqref{e_2}. Otherwise if $\|\bm E\|_F^2$ is very large, it indicates that the dictionary is not well designed for this class of signals and it is possible that this class of signal can not be recovered from the compressive measurements no matter what projection matrix is utilized. It follows from the norm consistent property that
\e
\|\bm \Phi \bm E\|_F \leq \|\bm \Phi\|_F\|\bm E\|_F
\label{eq:PhiE to Phi 1}
\ee
which implies informally that a smaller sensing matrix $\|\bm \Phi\|_F$ yields a smaller projected SRE $\|\bm \Phi \bm E\|_F$.

Also as illustrated before that the amount of training data should be sufficient so that they can represent the class of targeted signals, the energy in the corresponding SRE matrix $\bm E$ should spread out in every elements. In other words, one can view the expected SRE as an additive Guassian white noise. In this situation, we have the following result.

\begin{lemma}
	Suppose $\bm E(:,k) = \bm e_k, \forall k=1,\cdots P$ are i.i.d Gaussian random vectors with each of zero mean and covariance $\sigma^2\bm I$. Then for any $\bm \Phi\in R^{M\times N}$, we have
	\e
	\mathbb E\left[\|\bm \Phi \bm E\|_F^2\right] = P\sigma^2 \| \bm \Phi \|_F^2.
	\label{eq:PhiE to Phi 2}
	\ee
	where $\mathbb E$ denotes the expectation operator.

Moreover, when the number of training samples $P$ approaches to $\infty$, we have $\frac{\|\bm \Phi \bm E\|_F^2}{P}$ converges in probability and almost surely to $\sigma^2 \|\bm \Phi\|_F^2$. In particular,
\e
\sqrt{p}\left( \frac{\|\bm \Phi \bm E\|_F^2}{P} - \sigma^2 \|\bm \Phi\|_F^2\right) \xrightarrow{d} {\cal N}(0,2\sigma^2\|\bm \Phi \bm \Phi^{\cal T}\|_F^2)
\label{eq:lem 2}\ee
where ${\cal N}(\mu,\varsigma)$ denotes the Gaussian distribution of mean $\mu$ and variance $\varsigma$, and $\xrightarrow{d}$ means convergence in distribution.

\label{Lemma:omit_E}
\end{lemma}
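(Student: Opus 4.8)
The plan is to reduce the entire statement to classical limit theorems applied to a sum of i.i.d.\ scalar random variables. Writing $\bm E=[\bm e_1,\dots,\bm e_P]$ column-wise, the Frobenius norm decomposes as
\e
\|\bm \Phi \bm E\|_F^2=\sum_{k=1}^{P}\|\bm \Phi \bm e_k\|_2^2=\sum_{k=1}^{P}Z_k,\qquad Z_k\triangleq \bm e_k^{\cal T}\bm A\bm e_k,
\ee
where $\bm A\triangleq \bm \Phi^{\cal T}\bm \Phi\in\Re^{N\times N}$ is symmetric positive semidefinite. Since the $\bm e_k$ are i.i.d., so are the $Z_k$; hence once I control the first two moments of a single $Z_1$, the three assertions follow respectively from linearity of expectation, the laws of large numbers, and the central limit theorem.

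First I would compute the moments of $Z_1$ by diagonalizing $\bm A=\bm U\bm \Lambda\bm U^{\cal T}$ with $\bm \Lambda=\diag(\lambda_1,\dots,\lambda_N)$ and $\bm U$ orthogonal. Because ${\cal N}(\bm 0,\sigma^2\bm I)$ is rotation invariant, the coordinates $g_i\triangleq \bm U(:,i)^{\cal T}\bm e_1$ are again i.i.d.\ ${\cal N}(0,\sigma^2)$, so that $Z_1=\sum_{i=1}^{N}\lambda_i g_i^2$. Using $\mathbb E[g_i^2]=\sigma^2$ and $\text{Var}(g_i^2)=2\sigma^4$ then gives
\e
\mathbb E[Z_1]=\sigma^2\sum_{i}\lambda_i=\sigma^2\,\text{trace}(\bm A)=\sigma^2\|\bm \Phi\|_F^2,
\ee
\e
\text{Var}(Z_1)=2\sigma^4\sum_{i}\lambda_i^2=2\sigma^4\,\text{trace}(\bm A^2).
\ee
Summing the mean over $k$ and invoking linearity of expectation immediately yields \eqref{eq:PhiE to Phi 2}.

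For the convergence statements I would apply the strong and weak laws of large numbers to $\{Z_k\}$: since $\mathbb E|Z_1|<\infty$, the sample average $\tfrac{1}{P}\sum_{k=1}^{P}Z_k=\|\bm \Phi \bm E\|_F^2/P$ converges almost surely, and hence in probability, to $\mathbb E[Z_1]=\sigma^2\|\bm \Phi\|_F^2$. For the fluctuation result \eqref{eq:lem 2}, the finiteness of $\text{Var}(Z_1)$ lets me invoke the Lindeberg--L\'evy central limit theorem, yielding $\sqrt{P}\big(\tfrac{1}{P}\sum_k Z_k-\mathbb E[Z_1]\big)\xrightarrow{d}{\cal N}(0,\text{Var}(Z_1))$, i.e.\ the stated Gaussian limit with the variance identified in the previous step.

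The only non-routine point is to rewrite $\text{Var}(Z_1)$ in the intrinsic form appearing in \eqref{eq:lem 2}. Here I would pass through the singular value decomposition $\bm \Phi=\bm V\bm \Sigma\bm W^{\cal T}$: the eigenvalues $\lambda_i$ of $\bm A=\bm \Phi^{\cal T}\bm \Phi$ are the squared singular values $s_i^2$ of $\bm \Phi$, so $\text{trace}(\bm A^2)=\sum_i s_i^4=\text{trace}\big((\bm \Phi\bm \Phi^{\cal T})^2\big)=\|\bm \Phi\bm \Phi^{\cal T}\|_F^2$, using that $\bm \Phi^{\cal T}\bm \Phi$ and $\bm \Phi\bm \Phi^{\cal T}$ share the same nonzero eigenvalues and that $\bm \Phi\bm \Phi^{\cal T}$ is symmetric. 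This produces the asymptotic variance $2\sigma^4\|\bm \Phi\bm \Phi^{\cal T}\|_F^2$. The main obstacle is thus purely the second-moment bookkeeping for the Gaussian quadratic form, everything else being a direct application of the classical limit theorems; the only care required is to track the correct power of $\sigma$ and to match $\text{trace}(\bm A^2)$ with the Frobenius norm $\|\bm \Phi\bm \Phi^{\cal T}\|_F^2$.
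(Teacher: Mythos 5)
Your proof is correct and follows essentially the same route as the paper's: decompose $\|\bm \Phi \bm E\|_F^2$ into a sum of i.i.d.\ Gaussian quadratic forms, diagonalize to reduce to weighted $\chi^2_1$ variables (you diagonalize $\bm \Phi^{\cal T}\bm \Phi$ where the paper uses $\bm \Phi\bm \Phi^{\cal T}$, an immaterial difference since they share nonzero eigenvalues), and then invoke the LLN and the Lindeberg--L\'evy CLT. One remark: your (correct) variance computation gives the asymptotic variance $2\sigma^4\|\bm \Phi\bm \Phi^{\cal T}\|_F^2$, which matches the paper's own intermediate calculation but \emph{not} the $2\sigma^2\|\bm \Phi\bm \Phi^{\cal T}\|_F^2$ displayed in \eqref{eq:lem 2} --- that is a typo in the lemma statement, so you should not claim your result is ``the stated Gaussian limit'' without flagging the discrepancy.
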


\begin{proof} 
For each $k$, we first define $\bm d_k = \bm \Phi \bm e_k$. Since $\bm e_k\sim {\mathcal N}(0,\sigma^2\bm I)$, we have $\bm d_k  \sim {\mathcal N}(0,\sigma^2\bm\Phi \bm \Phi^{\cal T})$. Let $\bm \Phi \bm \Phi^{\cal T} = \bm Q \bm \Lambda \bm Q^{\cal T}$ be an eigendecomposition of $\bm \Phi \bm \Phi^{\cal T}$, where $\bm \Lambda$ is an $M\times M$ diagonal matrix with the non-negative eigenvalues $\lambda_1,\ldots,\lambda_M$ along its diagonal. We have
\[
\|\bm d_k\|_2^2 = \|\bm Q^{\cal T}\bm d_k\|_2^2
\]
and
\[
\bm Q^{\cal T}\bm d_k \sim {\mathcal N}(0,\sigma^2\bm \Lambda).
\]
For convenience, we define new random variables $\bm c = \bm Q^{\cal T}\bm d_k $ and $b_1 = \frac{1}{\lambda_1\sigma^2}\bm c^2(1), b_2 = \frac{1}{\lambda_1\sigma^2}\bm c^2(2), \ldots, b_M = \frac{1}{\lambda_1\sigma^2}\bm c^2(M)$. It is clear that $b_1$, $b_2$, \ldots, $b_M$ are independent random variables of $\chi_1^2$ distribution, the chi-squared distribution with $1$ degree of freedom.

Now we compute the mean of $\|\bm \Phi \bm e_k\|_F^2$:
\e\begin{split}
\mathbb E[\|\bm \Phi \bm e_k\|_F^2] &= \mathbb E[\| \bm d_k\|_F^2] = \mathbb E[\| \bm c\|_F^2]\\
& = \sum_{i=1}^M \lambda_i \sigma^2\mathbb E[b_i]= \sum_{i=1}^M \lambda_i \sigma^2\\
&= \text{trace}(\sigma^2\bm \Phi \bm \Phi^{\cal T}) = \sigma^2 \|\bm \Phi\|_F^2
\end{split}\label{eq:proof 1}\ee
where the second line we utilize $\mathbb E[\chi_1^2] = 1$.
The variance of $\|\bm \Phi \bm e_k\|_F^2$ is given by:
\e\begin{split}
\text{Var}[\|\bm \Phi \bm e_k\|_F^2] &= \text{Var}[\| \bm d_k\|_F^2] = \text{Var}[\| \bm c\|_F^2]\\
& = \sum_{i=1}^M \lambda_i^2 \sigma^4\text{Var}[b_i] = 2\sum_{i=1}^M \lambda_i^2 \sigma^4\\
& = 2\text{trace}(\sigma^4\bm \Phi \bm \Phi^{\cal T}\bm \Phi \bm \Phi^{\cal T})\\
& = 2\sigma^4 \|\bm \Phi  \bm \Phi^{\cal T}\|_F^2
\end{split}\label{eq:proof 2}\ee
where the second line we utilize $\text{Var}[\chi_1^2] = 2$, and the third line follows because
\[
\bm \Phi \bm \Phi^{\cal T}\bm \Phi \bm \Phi^{\cal T} = \bm Q \bm \Lambda^2 \bm Q^{\cal T}.
\]

Thus, we obtain \eqref{eq:PhiE to Phi 2} by noting that
	\[
	\mathbb E[\|\bm \Phi \bm E\|_F^2] = \mathbb E\left[\sum_{k=1}^P \|\bm \Phi \bm e_k\|_2^2 \right] = P\sigma^2 \|\bm \Phi\|_F^2
	\]
It follows from \eqref{eq:proof 1} and \eqref{eq:proof 2} that $\left\{\|\bm \Phi \bm e_1\|_2^2,\ldots, \|\bm \Phi \bm e_P\|_2^2\right\}$ is a sequence of independent and identically distributed random variable drawn from distributions of expected values given by $\sigma^2 \|\bm \Phi\|_F^2$ and variances given by $2\sigma^4 \|\bm \Phi  \bm \Phi^{\cal T}\|_F^2$. Thus, by the law of large numbers~\cite{CR2002}, the average $\frac{\|\bm \Phi \bm E\|_F^2}{P}$ converges in probability and almost surely to the expected value $\sigma^2 \|\bm \Phi\|_F^2$ as $P\rightarrow \infty$. Finally, the central limit theorem~\cite{CR2002} establishes that as $P$ approaches infinity, the random variables $\sqrt{P}(\frac{\|\bm \Phi \bm E\|_F^2}{P} - \sigma^2 \|\bm \Phi\|_F^2)$ converges in distribution to a normal ${\cal N}(0,2\sigma^2\|\bm \Phi \bm \Phi^{\cal T}\|_F^2)$.

\end{proof}
In words, Lemma~\ref{Lemma:omit_E} indicates that when the number of training samples approaches to infinity, $\|\bm \Phi \bm E\|_F^2$ is proportional to $\|\bm \Phi\|_F^2$. Inspired by \eqref{eq:PhiE to Phi 1}-\eqref{eq:lem 2}, it is expected that without any training signals and their corresponding SRE matrix $\bm E$, a robust projection matrix can be obtained by solving the following problem
\e
\bm \Phi=\arg\min\limits_{\tilde{\bm \Phi}}f(\tilde{\bm\Phi})\equiv\|\bm I_L-\bm \Psi^\mathcal T\tilde{\bm \Phi}^\mathcal T\tilde{\bm \Phi}\bm\Psi\|_F^2+\lambda\|\tilde{\bm \Phi}\|_F^2 \label{e_9}
\ee
or
\e
\bm \Phi=\arg\min\limits_{\tilde{\bm \Phi}, \bm G\in H_\xi} f(\tilde{\bm\Phi},\bm G)\equiv \|\bm G-\bm \Psi^\mathcal T\tilde{\bm \Phi}^\mathcal T\tilde{\bm \Phi}\bm\Psi\|_F^2+\lambda\|\tilde{\bm \Phi}\|_F^2 \label{e_9_2}
\ee
Here, with abuse of notation, we use both $f$ to denote the objective function in \eqref{e_9} and \eqref{e_9_2}. However, it should be clear from the context as we always use $f(\tilde{\bm \Phi})$ to represent the one in \eqref{e_9} and $f(\tilde{\bm \Phi}, \bm G)$ to represent the one in \eqref{e_9_2}. Since the more training samples can better represent the signals of interest and the SRE, \eqref{eq:lem 2} indicates that the sensing matrices obtained by \eqref{e_9} and \eqref{e_9_2} are more robust to SRE than the ones obtained by \eqref{e_7} and \eqref{e_7_2}. This is demonstrated by experiments in Section~\ref{S_4}. The numerical algorithms is presented to solve \eqref{e_9} and \eqref{e_9_2} in the following section.
\subsection{Efficient Algorithms for Solving \eqref{e_9} and \eqref{e_9_2}}
Note that $f(\tilde{\bm\Phi})$ is a special case of $f(\tilde{\bm\Phi},\bm G)$ with $\bm G = \bm I_L$. Thus, we first consider solving
\e
\min_{\tilde{\bm\Phi}}~f(\tilde{\bm\Phi},\bm G) = \|\bm G-\bm \Psi^\mathcal T\tilde{\bm \Phi}^\mathcal T\tilde{\bm \Phi}\bm\Psi\|_F^2+\lambda\|\tilde{\bm \Phi}\|_F^2
\label{eq:min f} \ee
 with an arbitrary $\bm G$. {To that end, we introduce a low-rank minimization problem
\e\begin{split}
\min_{\bm A} g(\bm A, \bm G)&\equiv \|\bm G-\bm \Psi^\mathcal T\bm A \bm\Psi\|_F^2+\lambda \text{trace}(\bm A)\\
& \text{s.t.}\ \text{rank}(\bm A)\leq M,  \bm A\succeq 0
\end{split}\label{eq:min g}\ee
By eigendecomposition of $\bm A$, it is clear that \eqref{eq:min f} is equivalent to \eqref{eq:min g}. The problem \eqref{eq:min f} is often referred to as the factor problem of \eqref{eq:min g}. Also note that $g(\bm A,\bm G)$ is a convex function of $\bm A$ for any fixed $\bm G$, though the problem \eqref{eq:min g} is nonconvex because of the rank constraint. The recent work~\cite{Zhu2017} has shown that a number of iterative algorithms (including gradient descent) can provably solve the factored problem (\ie, \eqref{eq:min f}) for a set of low-rank matrix optimizations (\ie, \eqref{eq:min g}).} Thus, in this paper, the \emph{Conjugate-Gradient} (CG) \cite{NW06} method is utilized to solve \eqref{eq:min f}.\footnote{We note that both of the methods shown in \cite{LLLBJH15} \cite{HBLZ16} for solving \eqref{eq:min f} need to calculate the inversion of $\bm \Psi\bm\Psi^\mathcal T$. However, in practice, the learned dictionary sometimes is  ill-conditioned, which may cause numerical instable problem if directly applying their methods. Thus, as global convergence of many local search algorithms for solving similar low-rank optimizations is guaranteed in~\cite{Zhu2017},  CG is chosen to solve \eqref{eq:min f}. Obviously, if the aforementioned problem does not happen in practical cases, the method in \cite{LLLBJH15} \cite{HBLZ16} can be used to address \eqref{e_9}. Moreover, we will show that CG and the methods shown in \cite{LLLBJH15} \cite{HBLZ16} yield a similar solution in the following experiments.} 
The gradient of $f(\tilde{\bm\Phi},\bm G)$ in terms of $\tilde{\bm \Phi}$ is given as follows:
\e
\nabla_{\tilde{\bm\Phi}} f(\tilde{\bm \Phi},\bm G)=2\lambda\bm\tilde{\bm \Phi}-4\tilde{\bm \Phi}\bm\Psi\bm G\bm\Psi^\mathcal T+4\tilde{\bm \Phi}\bm\Psi\bm\Psi^\mathcal T\tilde{\bm \Phi}^\mathcal T \tilde{\bm \Phi}\bm\Psi\bm\Psi^\mathcal T\label{e_10}
\ee
After obtaining the gradient of $f(\tilde{\bm\Phi},\bm G)$, the toolbox \emph{minFunc}\footnote{We note that minFunc is a stable toolbox that can be efficiently applied with millions of variables.} \cite{MS05} is utilized to solve \eqref{eq:min f} with CG method. We note that the gradient-based method only involves simple matrix multiplication in \eqref{e_10}, without requiring performing SVD and matrix inversion.  Hence it is also suitable for designing the projection matrix for a CS system working on high dimensional signals.

We now turn to solve \eqref{e_9_2} which has two variables $\tilde{\bm \Phi}$ and $\bm G\in H_\xi$. A widely used strategy for such problems is the alternating minimization \cite{E07} \cite{AFS12} \cite{LLLBJH15} \cite{HBLZ16}. The main idea behind alternating minimization for \eqref{e_9_2} is that we keep one variable constant (say $\tilde{\bm \Phi}$), and optimize over the other variable (say ${\bm G}$). Once $\bm G$ is fixed, as we explained before, we utilize CG method to solve \eqref{eq:min f}. On the other hand, the solution to $\min_{\bm G} f(\tilde{\bm \Phi}, \bm G)$ can be simply obtained by projecting the Gram matrix of the equivalent dictionary onto the set $H_\xi$ when we fix $\tilde {\bm \Phi}$. The main steps of the algorithm are outlined in Algorithm 1.

\begin{description}
	\item[Algorithm $1$]
	\item[Initialization:] Set $k=1$, $\bm\Phi_0$ as a random one and the number of iterations $Iter$.
	\item[Step I:] Set $\tilde{\bm G}_k = \bm \Psi^\mathcal T\bm \Phi_{k-1}^\mathcal T\bm \Phi_{k-1}\bm \Psi$ and then project it onto the set $H_\xi$:
	\en
	\bm G_k(i,j)=\left\{
	\begin{array}{ll}
      1, & i=j,\\
		\tilde{\bm G}_k(i,j),& 	i\neq j, |\tilde{\bm G}_k(i,j)|\leq\xi,\\
		\xi\cdot\text{sign}(\tilde{\bm G}_k(i,j)),& i\neq j, |\tilde{\bm G}_k(i,j)|>\xi
	\end{array}
	\right.
	\een
	 where $\text{sign}(\cdot)$ is a sign function.
	\item[Step II:] Solve $\bm\Phi_k=\arg\min_{\tilde{\bm \Phi}}f(\tilde{\bm \Phi},\bm G_k)$ with CG.

If $k<Iter$, set $k=k+1$ and go to Step I. Otherwise, terminate the algorithm and output $\Phi_{Iter}$.
\end{description}


{\bf Remarks:}
\begin{itemize}
	\item It is clear that this approach is independent of training data and can be utilized for most of CS systems as long as the sparsifying dictionary $\bm \Psi$ is given.
	\item Even in the case where the SRE matrix $\bm E$ is available, it is much easier and more efficient to solve \eqref{e_9} than \eqref{e_7} since typically the number of columns in $\bm E$ is dramatically greater than the size of $\bm \Psi$ and $\bm \Phi$, \ie, $P\gg M,N,L$.
	\item Simulation results with synthetic data and natural images (where the SRE matrix $\bm E$ is available) show that the proposed method also yields a comparable performance to or outperforms the methods in \cite{LLLBJH15} \cite{HBLZ16} in terms of SRA. Moreover, the experiments on natural images show that designing a projection matrix on a given dictionary which is learned with large dataset or high-dimensional training signals can improve SRA significantly with the same compression rate $\frac{M}{N}$. However, it requires a great deal of memories to store the SRE matrix $\bm E$ for either large dateset or high-dimensional training data.
\end{itemize}

\section{Simulation Results}\label{S_4} 
In this section, we perform a set of experiments on synthetic data and natural images to demonstrate the performance of the CS system with projection matrix designed by the proposed methods. For convenience, the corresponding CS systems are denoted by $CS_{MT}$ with $\tilde {\bm \Phi}$ obtained via \eqref{e_9} and $CS_{MT-ETF}$ with $\tilde {\bm \Phi}$ obtained via \eqref{e_9_2}, and are compared with the following CS systems: $CS_{randn}$ with a random projection matrix, $CS_{LH}$ with the sensing matrix obtained via \eqref{e_7} \cite{LLLBJH15}, $CS_{LH-ETF}$ with the sensing matrix obtained via \eqref{e_7_2} \cite{LLLBJH15},  and  $CS_{DCS}$ \cite{DCS09}. It was first proposed in \cite{DCS09} that \emph{simultaneously} optimizing  $\bm \Phi$ and $\bm \Psi$ for a CS system results in better performance in terms of SRA. In the sequel, we also examine this strategy in natural images and the corresponding CS system is denoted by $CS_{S-DCS}$.\footnote{In our experiment, the coupling factor utilized in $CS_{S-DCS}$ is set to $0.5$ which is the best value in our setting.
 Generally speaking, $CS_{S-DCS}$ should have a best performance in terms of SRA because it optimizes projection matrix and dictionary simultaneously. Thus, the performance of $CS_{S-DCS}$ serves as the indicator of the best performance can be achieved by other CS systems that only consider optimizing the projection matrix. }
For simplicity, the parameter $\xi$ in $H_\xi$ is set to Welch bound in the following experiments.

The SRA is evaluated in terms of the peak signal-to-noise ratio (PSNR) \cite{E10}
\en
\varrho_{psnr}\triangleq 10\times\log 10\left[\frac{(2^r-1)^2}{\varrho_{mse}}\right]\text{dB}
\een
with $r=8$ bits per pixel.
We also utilized the measure $\varrho_{mse}$:
\en
\varrho_{mse}\triangleq \frac{1}{N\times P}\sum\limits_{k=1}^P\| \tilde{\bm x}_k- \bm x_k\|_2^2
\een
where $\bm x_k$ is the original signal, $\tilde{\bm x}_k=\bm\Psi\tilde{\bm\theta}_k$ stands for the reconstructed signal with $\tilde{\bm \theta}_k$ the solution of \eqref{sparse_recov}, and $P$ is the number of patches in an image or the testing data.

\subsection*{A. Synthetic Data Experiments}
An $N\times L$ dictionary $\bm\Psi$ is generated with normally distributed entries and then is normalized so that each column has unit $l_2$ norm. We also generate a random $M\times N$ matrix $\bm\Phi_0$ (where each entry has Gaussian distribution of zero-mean and variance $1$)
 as the initial condition  for all of the aforementioned projection matrices. $\bm \Phi_0$ is also utilized as the sensing matrix in $CS_{randn}$.

The synthetic data for training and testing is obtained as follows. A set of $2P$ $K$-sparse vectors $\{\bm\theta_k\in\Re^L\}$ is generated as the sparse coefficients where each non-zero elements of $\{\bm\theta_k\}$ is randomly positioned with a Gaussian distribution of zero-mean and unit variance. The set of signal vectors $\{\bm x_k\}$ is produced with $\bm x_k=\bm\Psi\bm s_k+\bm e_k\triangleq \bm x_k^{(0)}+\bm e_k,~\forall k$, where $\bm\Psi$ is the given dictionary and $\bm e_k$ is the random noise with Gaussian distribution of zero-mean and variance $\sigma_e^2$ to yield different signal-to-noise ration (SNR) (in dB) of the signals. Clearly, $\bm x_k^{(0)}$ is exactly $K$-sparse in $\bm \Psi$, while $\bm x_k$ is approximately $K$-sparse in $\bm \Psi$.

Denote $\bm X=\bm X^{(0)}+\bm \Delta$ as the signal matrix of dimension $N\times 2P$, where $\bm X^{(0)}(:,k) =\bm x_k^{(0)}$ and $\bm \Delta(:,k) = \bm e_k$. We use the SRE matrix $\bm E=\bm\Delta(:,1:P)$ in \eqref{e_7} and \eqref{e_7_2} whose solutions are used for $CS_{LH}$ and $CS_{LH-ETF}$, respectively. The data $\bm X(:,P+1:2P)$ is utilized for testing the CS systems. The measurements $\{\bm y_k\}$ are obtained by $\bm y_k=\bm\Phi\bm X(:,P+k), ~\forall k\in(0,P]$ where $\bm\Phi$ is the projection matrix of the CS systems.  For simplicity, OMP is chosen to solve the sparse coding problem throughout the experiments.

With the synthetic data, we conduct three set of experiments to demonstrate the performance of our proposed framework for robust projection matrix design, \ie, \eqref{e_9} and \eqref{e_9_2}. In these three set of experiments, we respectively show the convergence of CG method, the effect of $\lambda$ and the signal recovery accuracy of the proposed projection matrices $CS_{MT}$ and $CS_{MT-ETF}$ versus different SNR of the signals.

$1$) \emph{Convergence Analysis}:  Let $M = 20$, $N=60$, $L=100$ and $K=4$.We utilize CG to solve \eqref{e_9}. We note that a random dictionary with well-conditioned is chosen and thus we also compute the closed-form solution shown in \cite{LLLBJH15} for \eqref{e_9}. The objective value obtained by the closed-form solution is denoted by $f^*$ and is compared with the CG method. The evolution of $f(\bm\Phi)$ for different $\lambda$ is shown in Figure \ref{sythetic_convergence}. { We note that different $\lambda$ results in different functions $f(\bm\Phi)$ and hence different $f^*$.} We observe global convergence of CG method for solving \eqref{e_9} { with all the choices of $\lambda$}.

\begin{figure}[htb!]
	\centering
	\includegraphics[width=0.5\textwidth]{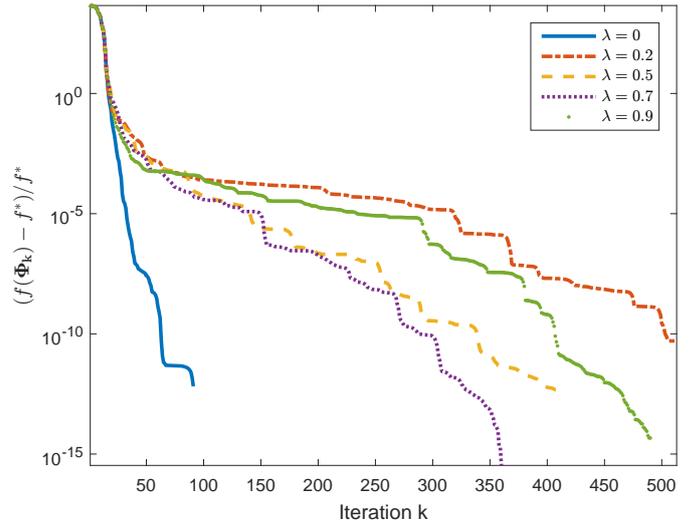}\\
	\caption{Evolution of $f(\bm\Phi)$ in \eqref{e_9} for different values of $\lambda$ versus iteration, where the sparsity level is set to $K=4$. Here $k$ represents the iteration of the CG method.}\label{sythetic_convergence}
\end{figure}

$2$) The Choice of $\lambda$:  With $M = 20$, $N=60$, $L=80$, $K=4$ and SNR = $15$ dB, we check the effect of the trade-off parameter $\lambda$ in terms of $\varrho_{mse}$  for $CS_{MT}$ and $CS_{MT-ETF}$. The $\lambda$ is chosen from $0$ to $2$ with step size $0.01$. The evaluation of $\varrho_{mse}$ versus different $\lambda$ is depicted in Figure \ref{sythetic_choice_lambda}.

\vspace{3pt}
\noindent{\emph {Remark $1$:}}
\begin{itemize}
	\item As seen from Figure \ref{sythetic_choice_lambda}, different $\lambda$ yields different performance in terms of $\varrho_{mse}$ for this practical situation where the SNR is $15$dB. It is clear that a proper choice of $\lambda$ results in  significantly better performance than other values, especially for $CS_{MT-ETF}$. Clearly, the advantage of the proposed method is shown by comparing the cases for $\lambda=0$ and other values of $\lambda$ as the former corresponds to the traditional approaches which do not take the SRE into account. In the sequel, we simplicity search the best $\lambda$ (with which the CS systems attain the minimal $\varrho_{mse}$ for the test data) within $(0,1]$ for each experiment setting.
	\item According to this experiment, if $\lambda$ is well chosen, $CS_{MT-ETF}$ has better performance than $CS_{MT}$ in terms of $\varrho_{mse}$. However,  the performance of $CS_{MT-ETF}$ is more sensitive than with $\lambda$  than $CS_{MT}$. We will show in the next experiment that the performance of $CS_{MT-ETF}$  outperforms $CS_{MT}$  in synthetic data when the SNR is not too small. However, for natural images which have relatively large SRE, $CS_{MT}$ always has better performance than $CS_{MT-ETF}$. This phenomenon is also observed for $CS_{LH}$ and $CS_{LH-ETF}$ in \cite{LLLBJH15} \cite{HBLZ16}. Thus, we only consider the performance of $CS_{MT}$ and $CS_{LH}$ for the natural images in next section.

\end{itemize}

\begin{figure}[htb!]
	\centering
	\includegraphics[width=0.5\textwidth]{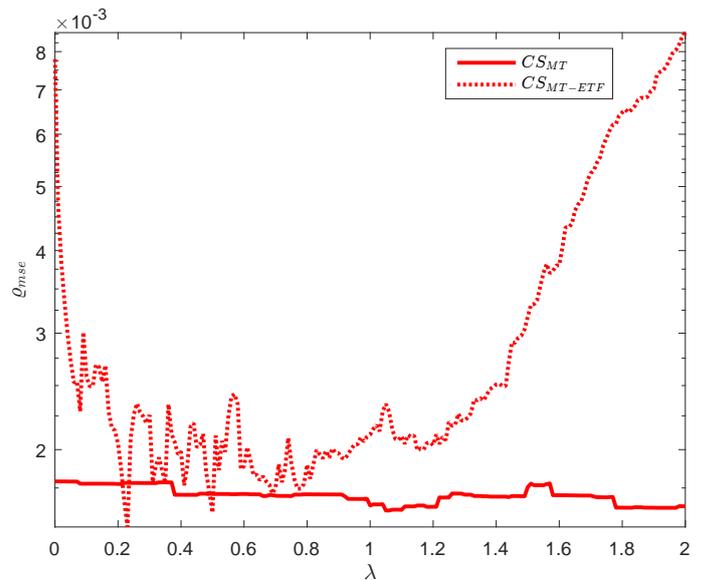}
	\caption{Performance evaluation: $\varrho_{mse}$ versus the values of $\lambda$, where the sparsity level is set to $K=4$ and SNR = $15$ dB.}\label{sythetic_choice_lambda}
\end{figure}

$3$) Signal Recovery Accuracy Evaluation: With $M = 20$, $N=60$, $L=80$, $K=4$ and $P=1000$, we compare our CS systems $CS_{MT}$ and $CS_{MT-ETF}$ with other CS systems for SNR varying from $5$ to $45$ dB. Figure \ref{sythetic_evaluation_mse} displays signal reconstruction error $\varrho_{mse}$ versus SNR for all six CS systems.

\begin{figure}[htb!]
	\centering
	\includegraphics[width=0.5\textwidth]{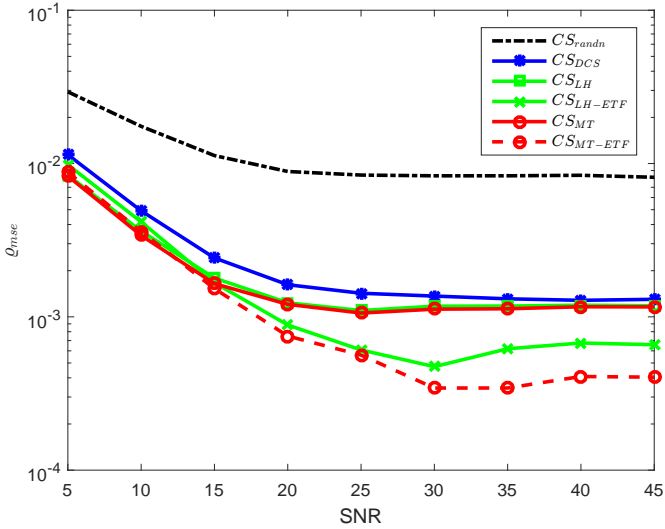}
	\caption{Reconstruction error $\varrho_{mse}$ versus SNR for each of the six CS systems.}\label{sythetic_evaluation_mse}
\end{figure}

\vspace{3pt}
\noindent{\emph {Remark $2$:}}
\begin{itemize}
	\item It is clear that the sensing matrices obtained via \eqref{e_9} and \eqref{e_9_2} have at least similar performance to the ones obtained via \eqref{e_7} and \eqref{e_7_2} \cite{LLLBJH15,HBLZ16}, though our proposed framework does not utilize the SRE matrix $\bm E$. We also observe that $CS_{MT-ETF}$ outperforms $CS_{LH-ETF}$ when SNR is larger than $15$ dB. This demonstrates the effectiveness of our proposed framework in Section \eqref{S_3} and verifies our argument in {\bf Lemma  \ref{Lemma:omit_E}} that the sparse representation error is not explicitly required.
	\item As seen from Figure \ref{sythetic_evaluation_mse}, $CS_{MT}$ has slightly better performance than $CS_{MT-ETF}$ when the SNR is smaller than $15$ dB. In other words, we recommend to utilize $CS_{MT}$ (with the sensing matrix obtained via \eqref{e_9}) when the sparse representation error is relatively large, \eg, natural images, which meets our claims in \emph{Remark $1$}.

\end{itemize}

\subsection*{B. Natural Images Experiments}
In this section, three set of experiments are conducted on natural images. Through these experiments, we verify the effectiveness of the proposed framework for robust sensing matrix design in Section \ref{S_3} and demonstrate the reason for dropping the requirement on the SRE matrix $\bm E$. As we explained before, since the SRE is relatively large for natural images, $CS_{MT}$ and $CS_{LH}$ are respectively superior to $CS_{MT-ETF}$ and $CS_{LH-ETF}$. Thus, we only show the results for $CS_{MT}$ and $CS_{LH}$.

In the first set of experiments, we compare the performance of $CS_{MT}$ and $CS_{LH}$ when a set of training signals and the corresponding SRE matrix $\bm E$ are available. In the second set of experiments, we design the projection matrix with a dictionary learned on a much larger training dataset. The performance of CS systems with a higher dimensional dictionary is given in the \emph{Experiment C}. We observe that a CS system with a higher dimensional dictionary and a projection matrix designed by our proposed algorithm yields better SRA under the same compression rate. Both training and testing datasets used in these three set of experiments are extracted as follows from the LabelMe database \cite{RTF}. Note that \emph{Data I} is extracted with small patches and \emph{Data II} is obtained by sample larger patches for the third experiment. 

\vspace{3pt}
\noindent\emph{Training Data I:} A set of $8\times 8$ non-overlapping patches is obtained by randomly extracting $400$ patches from each image in the whole LabelMe training dataset. We arrange each patch of $8\times 8$ as a vector of $64\times 1$. A set of $400\times2920=1.168\times10^6$ training samples is obtained to train the sparsifying dictionary.

\vspace{3pt}
\noindent\emph{Testing Data I:} A set of $8\times 8$ non-overlapping patches is obtained by randomly extracting $15$ patches from $400$ images in LableMe testing dataset as the testing data.
\vspace{3pt}

\vspace{3pt}
\noindent\emph{Training Data II:} The training data contains a set of $16\times 16$ non-overlapping patches which are obtained by randomly extracting  $400$ patches from the whole images in the LabelMe training dataset.  Each $16\times 16$ patch is then arranged as a length-$256$ vector. A set of $1.168\times10^6$ training samples is utilized.

\vspace{3pt}
\noindent\emph{Testing Data II:} The testing data is extracted in the same way for the training data but from the LabelMe testing dataset. We randomly extract $8000$ testing samples from $400$ images with each sample an $16\times 16$ non-overlapping patch.
\vspace{3pt}

\subsection*{Experiment A: small dataset and low dimensional dictionary}
We perform the same experiment as in \cite{LLLBJH15} to demonstrate the effectiveness of the proposed CS system $CS_{MT}$ without using the SRE $\bm E$. The training data is obtain by randomly chosen $6000$ samples from \emph{Training Data I} and the K-SVD algorithm is used to train the dictionary $\bm \Psi$.

Similar to \cite{LLLBJH15}, the parameters $M$, $N$, $L$ and $K$ are set to $20$, $64$, $100$ and $4$, respectively. The trade-off parameter $\lambda$ in $CS_{LH}$ is set to $0.1$ to yield a highest $\varrho_{psnr}$ for \emph{Testing Data I}. We also set $\lambda=0.1$ for the proposed CS system $CS_{MT}$.

\vspace{3pt}
The behavior of the five projection matrices in terms of mutual coherence and projection noise is examined and shown in Table \ref{t_1}. In order to illustrate the effectiveness of the proposed projection matrix, ten natural images are conducted to check its performance in terms of PSNR. The results are shown in Table~\ref{t_2}.

\vspace{3pt}
\noindent{\emph {Remark $3$:}}
\begin{itemize}
	\item As seen from Table \ref{t_1} , the results are self-explanatory. It shows that $CS_{MT}$ has small $\|\bm\Phi\|_F$ and also small projection noise $\|\bm\Phi\bm E\|_F$. This supports the proposed idea of using $\|\bm\Phi\|_F$ as a surrogate of $\|\bm\Phi\bm E\|_F$  to design the robust projection matrix.
	
	\item As shown in Table \ref{t_2}, we observe that $CS_{MT}$ outperforms $CS_{LH}$ in terms of $\varrho_{psnr}$ for most of the tested images.  We note that as long as an image can be approximately sparsely represented by the learned dictionary $\bm \Psi$, it is expected that the CS system $CS_{MT}$ yields reasonable performance for this image since the sensing matrix utilized in $CS_{MT}$ considers almost all the patterns of the SRE rather than a fixed one (as indicated by \eqref{eq:lem 2}) and thus is robust to SRE.

We also observe that $CS_{S-DCS}$ has highest $\varrho_{psnr}$; this is because $CS_{S-DCS}$ simultaneously optimizes the projection matrix and the sparsifying dictionary. It is of interest to note that $CS_{S-DCS}$ also has small $\|\bm\Phi\|_F$  and $\|\bm\Phi\bm E\|_F$ (as shown in Table \ref{t_1}). This again indicates that it is reasonable to minimize $\|\bm\Phi\|_F$ to get small projection noise $\|\bm\Phi\bm E\|_F$.
\end{itemize}
\begin{table*}[!htb]
	\centering   \caption{Performance Evaluated with Different Measures for Each of The Five Systems ($M=20$, $N=64$, $L=100$, $K=4$).}\label{t_1}
	\begin{tabular}{l||c|c|c|c|c}\hline\hline
		&$\|\bm I_L-\bm G\|_F^2$&$\mu(\bm D)$&$\mu_\text{av}(\bm D)$&$\|\bm \Phi\|_F^2$&$\|\bm\Phi\bm E\|_F^2$\\\hline
		$CS_{randn}$&$5.30\times10^5$&$0.951$&$0.384$&$1.25\times10^3$&$4.86\times10^3$\\\hline
		$CS_{DCS}$&$7.82\times10^4$&$0.999$&$0.695$&$3.41\times10^1$&$1.30\times10^2$\\\hline
		$CS_{S-DCS}$&$8.00\times10^1$&$0.857$&$0.326$&$6.24\times10^0$&$3.48\times10^1$\\\hline
		$CS_{LH}$&$8.02\times10^1$&$0.859$&$0.330$&$3.03\times10^7$&$3.07\times10^1$\\\hline
		$CS_{MT}$&$8.00\times10^1$&$0.848$&$0.331$&$6.59\times10^0$&$3.94\times10^1$
	\end{tabular}
\end{table*}


\begin{table*}[!htb]
	\centering
 \caption{Statistics of $\varrho_{psnr}$ for Ten Images Processed With $M=20$, $N=64$, $L=100$ for $K=4$. The highest $\varrho_{psnr}$ is marked in bold.}\label{t_2}
\begin{tabular}{l||c|c|c|c|c|c|c|c|c|c|c}\hline\hline
		&Lena&Elaine&Man&Barbara&Cameraman&Boat&Peppers&House&Bridge&Mandrill&Average\\\hline
		$CS_{randn}$&$29.01$&$29.23$&$27.65$&$22.46$&$23.15$&$26.57$&$24.71$&$28.34$&$26.49$&$20.61$&$25.82$\\\hline
		$CS_{DCS}$&$30.50$&$30.69$&$28.93$&$24.11$&$24.31$&$27.66$&$26.50$&$29.71$&$27.71$&$21.87$&$27.20$\\\hline
		$CS_{S-DCS}$&$\bm {33.18}$&$\bm{32.61}$&$\bm {31.52}$&$\bm{25.96}$&$\bm{26.75}$&$\bm{30.36}$&$\bm{29.71}$&$\bm{33.24}$&$\bm {30.20}$&$\bm{24.09}$&$\bm{29.76}$\\\hline
		$CS_{LH}$&$32.38$&$31.77$&$30.69$&$25.31$&$25.90$&$29.52$&$28.83$&$32.56$&$29.26$&$23.14$&$28.94$\\\hline
		$CS_{MT}$&$32.47$&$32.24$&$30.83$&$25.36$&$25.99$&$29.67$&$28.92$&$32.31$&$29.43$&$23.32$&$29.05$
	\end{tabular}
\end{table*}
\subsection*{Experiment B: large dataset and low dimensional dictionary}
In this set of experiments, we first learn a dictionary on large-scale training samples, \ie, \emph{Training Data I}, and then design the projection matrices with the learned dictionary. As discussed in the previous section, the large-scale training dataset makes it inefficient or even impossible to compute the SRE matrix $\bm E$. Therefore, it is inefficient to utilize the methods in \cite{LLLBJH15} \cite{HBLZ16} as they require  the SRE matrix $\bm E$. Similar reason holds for $CS_{S-DCS}$.
Fortunately, the following results show that the proposed CS system $CS_{MT}$ performs comparably to $CS_{LH}$.

The online dictionary learning algorithm in \cite{MBPS09} \cite{MBPS10} is chosen to train the sparsifying dictionary on the whole \emph{Training Data I}. For a fair comparison, we calculate the SRE $\bm E$ off-line for $CS_{LH}$ in this experiment.\footnote{In order to compare with $CS_{LH}$,
	we still compute the SRE matrix $\bm E$ for the training data though it requires abundant of extra storage and computation resources.} The same $M$, $N$, $L$, $K$ in \emph{Experiment A} are used in this experiment. 
$\lambda=0.9$ and $\lambda=1e-3$ are selected for $CS_{MT}$ and $CS_{LH}$, respectively. We note that the choice of $\lambda$ for $CS_{LH}$ is very sensitive to $\bm E$. This is because the two terms $\|\bm I_L-\bm \Psi^\mathcal T\tilde{\bm \Phi}^\mathcal T\tilde{\bm \Phi}\bm\Psi\|_F^2$ and $\|\tilde{\bm \Phi}\bm E\|_F^2$ in \eqref{e_7} for $CS_{LH}$  have different physical meanings and more importantly, the second term $\|\tilde{\bm \Phi}\bm E\|_F^2$ increases when we have more number of training data, while the first term is independent of the training data. Thus, we need to decrease $\lambda$ for $CS_{LH}$ when we increase the number of training data.

\vspace{3pt}
\noindent{\emph {Remark $4$:}}
\begin{itemize}
	\item  As shown in Table \ref{t_3}, benefiting from large-scale training samples, the performance of both $CS_{LH}$ and $CS_{MT}$ has been improved compared with the one in Table~\ref{t_2}. Moreover, we also observe that $CS_{MT}$ performs similarly to $CS_{LH}$. It is also of interest to note that the PSNR for $CS_{MT}$ in Table \ref{t_3} is higher than the one for $CS_{S-DCS}$ in Table \ref{t_2} for most of the tested images. This suggests that if the dictionary and the projection matrix are simultaneously optimized by online algorithm with large dataset, the performance of the corresponding CS system can be further improved since joint optimization ($CS_{S-DCS}$) is expected to have better performance than only optimizing projection matrix with a given dictionary ($CS_{MT}$) under the same settings. We note that the proposed framework for projection matrix design can be utilized for online simultaneous optimization of the dictionary and the projection matrix. Investigation along this direction is on-going.
	\item We compare the computational complexity of our proposed method with the one in \cite{LLLBJH15} \cite{HBLZ16}. The later mainly consists of two more steps: the calculations of the SRE matrix $\bm E$ and $\bm E\bm E^\mathcal T$. Calculating $\bm E$ involves the OMP algorithm \cite{RZE08} with computational complexity of $\mathcal O\left(PKNL(KL\log L+K^3)\right)$, where we repeat that $P$, $N$, $L$ and $K$ denote the number of samples, the dimension of signal, the number of atoms in dictionary and the sparsity level, respectively. The complexity for calculating $\bm E\bm E^\mathcal T$ is $\mathcal O\left(PN^2\right)$. Thus, compared with $CS_{MT}$, $CS_{LH}$ needs at least more computational time of $\mathcal O\left(PNK^2L^2\log L+PNLK^4+PN^2\right)$. In the set of next experiments, we will show the advantage of designing the projection matrix on a high dimensional dictionary. With $N$ and $L$ increasing, the efficiency of the proposed method $CS_{MT}$ becomes more distinct.
\end{itemize}


\begin{table*}[!htb]
	\centering   \caption{Statistics of $\varrho_{psnr}$ for Ten Images Processed With $M=20$, $N=64$, $L=100$ for $K=4$. The dictionary is trained on a large dataset. The highest $\varrho_{psnr}$ is marked in bold.}\label{t_3}
	\begin{tabular}{l||c|c|c|c|c|c|c|c|c|c|c}\hline\hline
		&Lena&Elaine&Man&Barbara&Cameraman&Boat&Peppers&House&Bridge&Mandrill&Average\\\hline
		$CS_{randn}$&$30.54$&$29.88$&$28.69$&$22.52$&$23.94$&$27.48$&$26.75$&$30.31$&$27.21$&$20.93$&$26.83$\\\hline
		$CS_{DCS}$&$30.20$&$29.91$&$28.57$&$23.43$&$23.89$&$27.33$&$26.51$&$30.06$&$27.31$&$21.36$&$26.86$\\\hline
		$CS_{S-DCS}$&$-$&$-$&$-$&$-$&$-$&$-$&$-$&$-$&$-$&$-$&$-$\\\hline
		$CS_{LH}$&$\bm{33.92}$&$32.78$&$\bm{31.88}$&$\bm{25.73}$&$26.85$&$30.60$&$\bm{30.17}$&$32.24$&$30.13$&$23.87$&$29.82$\\\hline
		$CS_{MT}$&$33.91$&$\bm{32.81}$&$\bm{31.88}$&$25.71$&$\bm{26.87}$&$\bm{30.62}$&$30.14$&$\bm{33.54}$&$\bm{30.16}$&$\bm{23.88}$&$\bm{29.95}$
	\end{tabular}
\end{table*}


	\begin{table*}[!htb]
	\centering   \caption{Statistics of $\varrho_{psnr}$ for Ten Images Processed With $M=80$, $N=256$, $L=800$ for $K=16$.  The highest $\varrho_{psnr}$ is marked in bold.}\label{t_4}
	\begin{tabular}{l||c|c|c|c|c|c|c|c|c|c|c}\hline
		&Lena&Elaine&Man&Barbara&Cameraman&Boat&Peppers&House&Bridge&Mandrill&Average\\\hline
		$CS_{randn}$&$30.74$&$29.70$&$28.82$&$22.76$&$23.65$&$27.38$&$27.14$&$30.77$&$26.95$&$20.77$&$26.87$\\\hline
		$CS_{DCS}$&$29.82$&$29.09$&$27.94$&$22.99$&$23.15$&$26.56$&$25.90$&$29.51$&$26.57$&$20.85$&$26.24$\\\hline
		$CS_{S-DCS}$&$-$&$-$&$-$&$-$&$-$&$-$&$-$&$-$&$-$&$-$&$-$\\\hline
		$CS_{LH}$&$-$&$-$&$-$&$-$&$-$&$-$&$-$&$-$&$-$&$-$&$-$\\\hline
		$CS_{MT}$&\bm{$34.41}$&$\bm{32.96}$&$\bm{32.35}$&$\bm{26.02}$&$\bm{27.18}$&$\bm{30.97}$&$\bm{30.74}$&$\bm{34.33}$&$\bm{30.31}$&$\bm{24.01}$&$\bm{30.33}$
	\end{tabular}
\end{table*}

\subsection*{Experiment C: large dataset and high dimensional dictionary}
Inspired by the work in \cite{SOZE16}, we attempt to design the projection matrix on a high dimensional dictionary in this set of experiments.  The reason to utilize a high dimensional dictionary is as follows. The sparse representation of a natural image $\bm X$ can be written as,
\en
\begin{array}{rcl}
	\bm X &=& \tilde{\bm X}+\bm E\\
	\tilde{\bm X}&\triangleq&\bm \Psi\bm\Theta
\end{array}
\een
where $\bm E$ is the sparse representation error.\footnote{Since OMP is used to conduct the sparse coding mission in this paper, each column of $\bm\Theta$ is exactly $K$-sparse.} We first recover $\bm\Theta$ by solving a set of \eqref{e_3} and then take $\tilde{\bm X}=\bm \Psi\bm\Theta$ as the recovered image. It is clear that no matter what projection matrix is utilized, the best we can obtain is $\tilde{\bm X}$ instead of $\bm X$. Thus, with a dictionary which can capture more information of the training dataset and better represent $\bm X$ with $\tilde{\bm X}$,  the corresponding CS system is excepted to yield a higher SRA. As stated in \cite{SOZE16}, training the dictionary with larger patches  results in smaller sparse representation errors for natural images. However, training dictionary on larger patches, we have to train on a large-scale dataset to better represent the signals of interest. This demonstrates the efficiency of the proposed method for designing a robust projection matrix on  a high dimensional dictionary as this method drops the requirement of the SRE matrix $\bm E$ which is not only in high dimension, but also large-scale.


The parameters $M$, $N$, $L$, $K$ and $\lambda$ are set to $80$, $256$, $800$, $16$ and $0.5$, respectively. Due to the fact that $CS_{MT}$ has a similar performance with $CS_{LH}$ and the choice of $\lambda$ for $CS_{LH}$  is very sensitive to $\bm E$, we omit the performance of $CS_{LH}$ in this experiment. The simulation results are presented in Table \ref{t_4}.  In order to demonstrate the visual effect clearly, two images ‘Lena’ and ‘Mandrill’ are shown in Figs. \ref{Lena_recoved} and \ref{Mandrill_recoved}, respectively. For a clear comparison, We choose the projection matrices and corresponding dictionary which yields the highest average $\varrho_{psnr}$ from Table \ref{t_2} to Table \ref{t_4} in Figs. \ref{Lena_recoved} and \ref{Mandrill_recoved}.

\begin{figure*}
	\centering
	\subfigure[]{\includegraphics[width=0.3\textwidth]{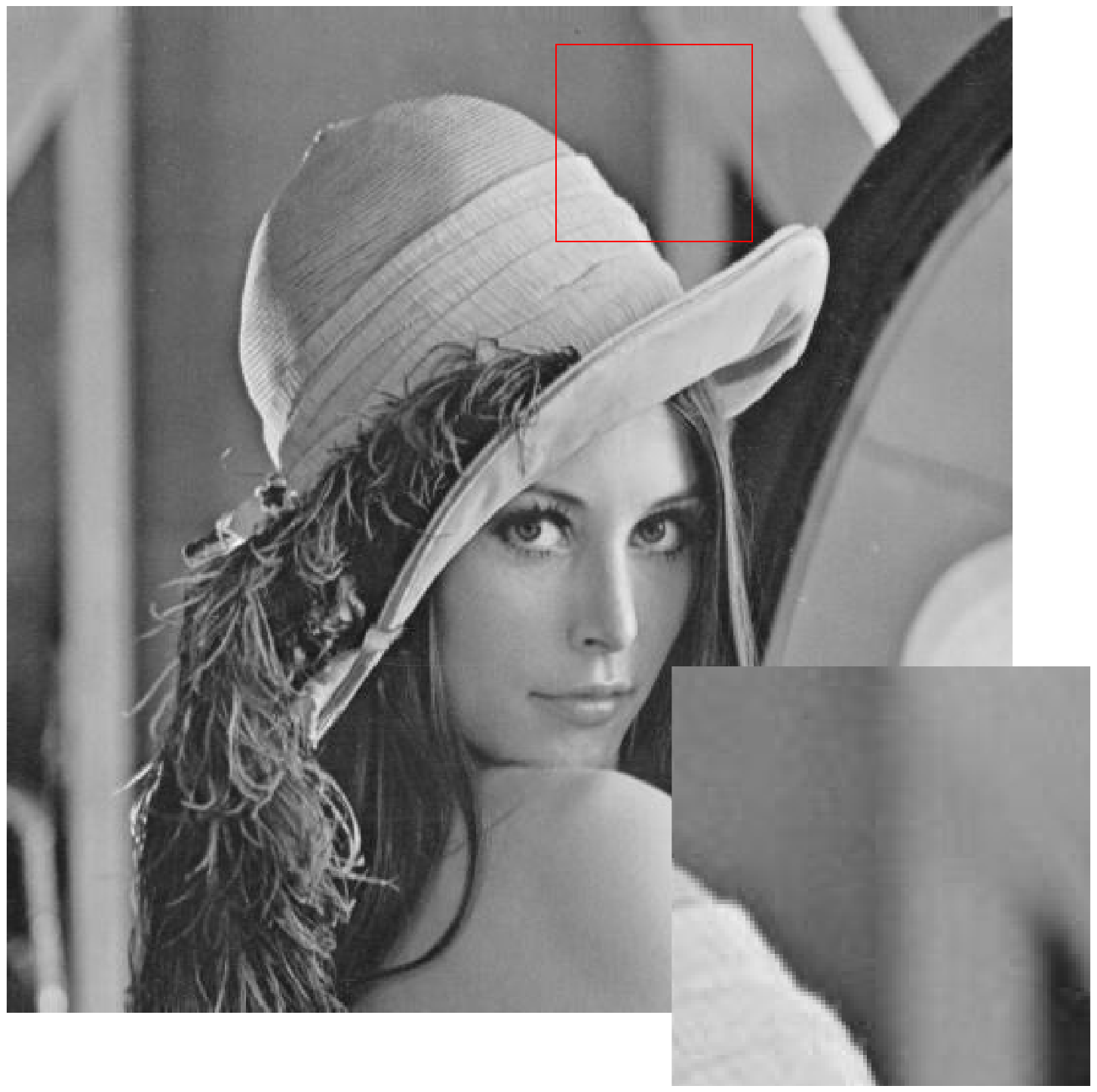}}
	\subfigure[]{\includegraphics[width=0.3\textwidth]{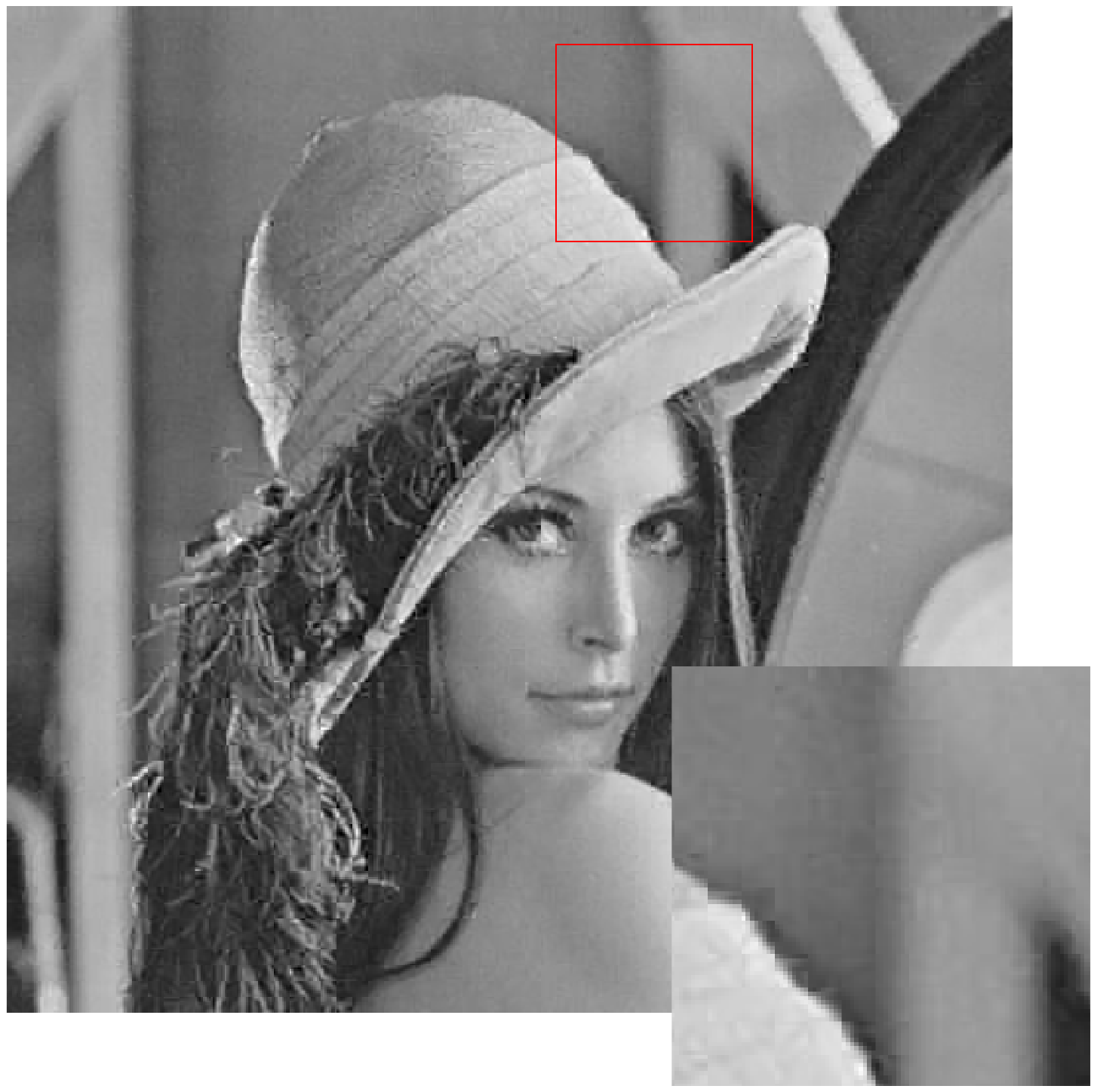}}
	\subfigure[]{\includegraphics[width=0.3\textwidth]{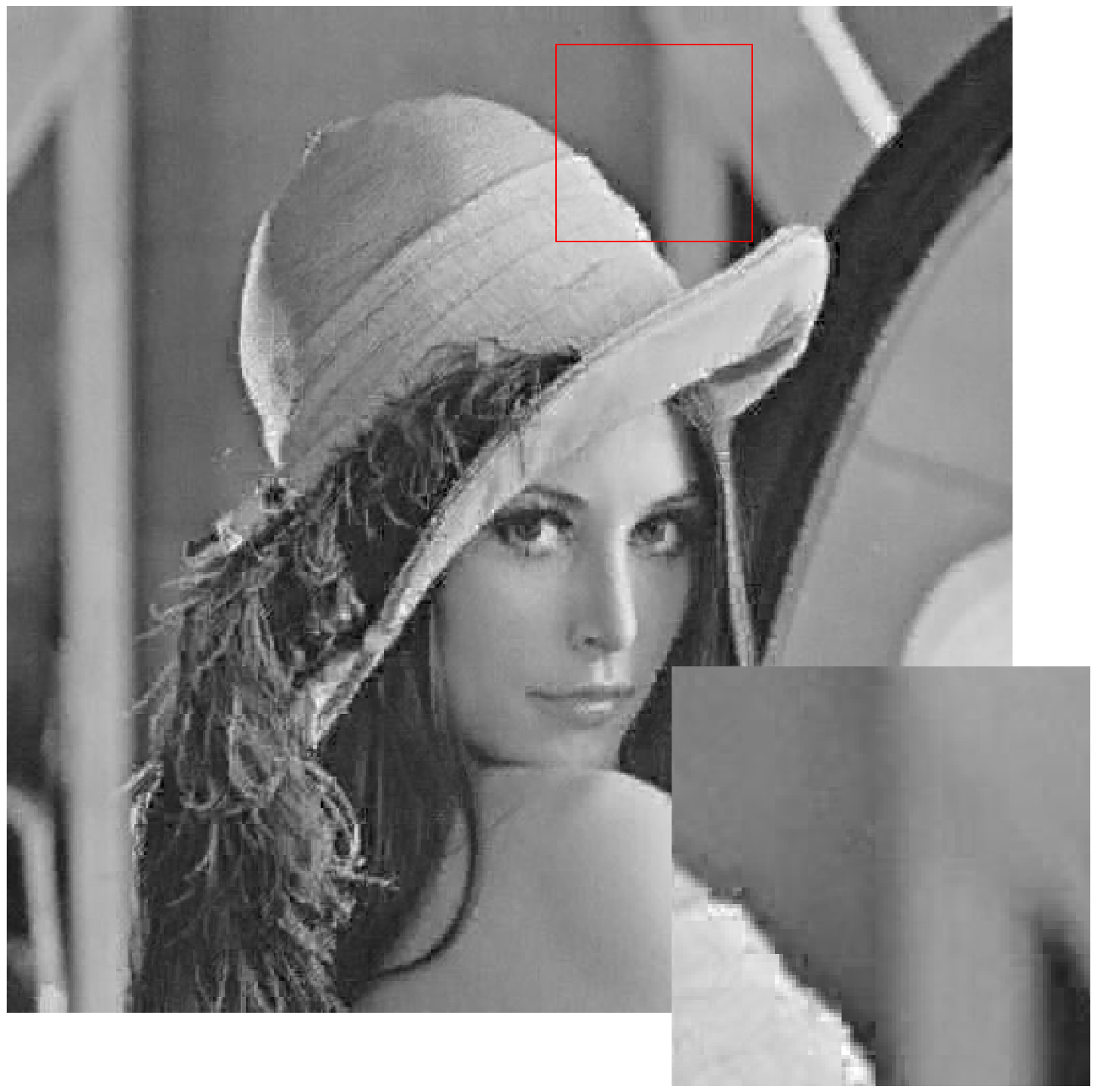}}\\

	\subfigure[]{\includegraphics[width=0.3\textwidth]{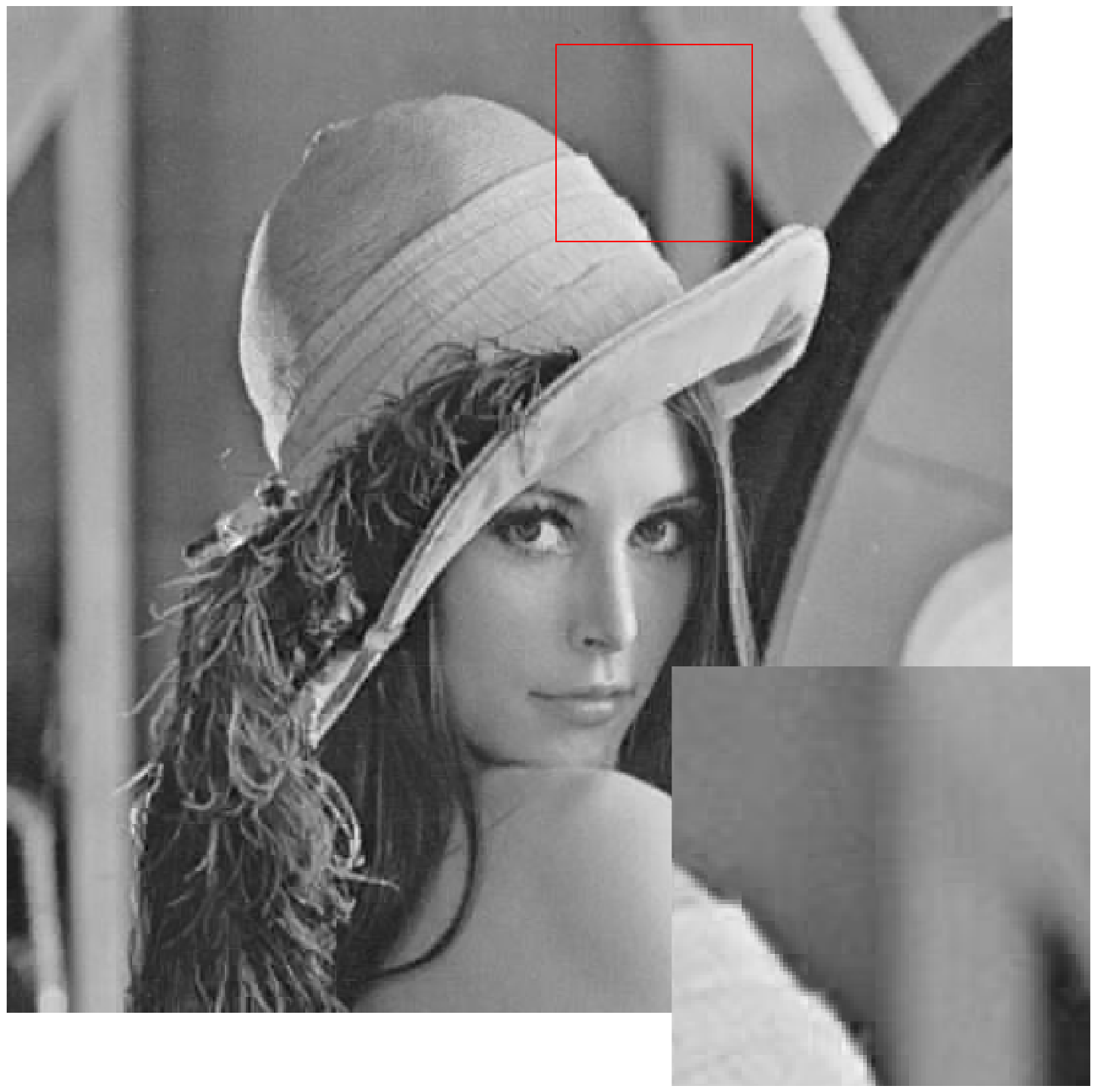}}
	\subfigure[]{\includegraphics[width=0.3\textwidth]{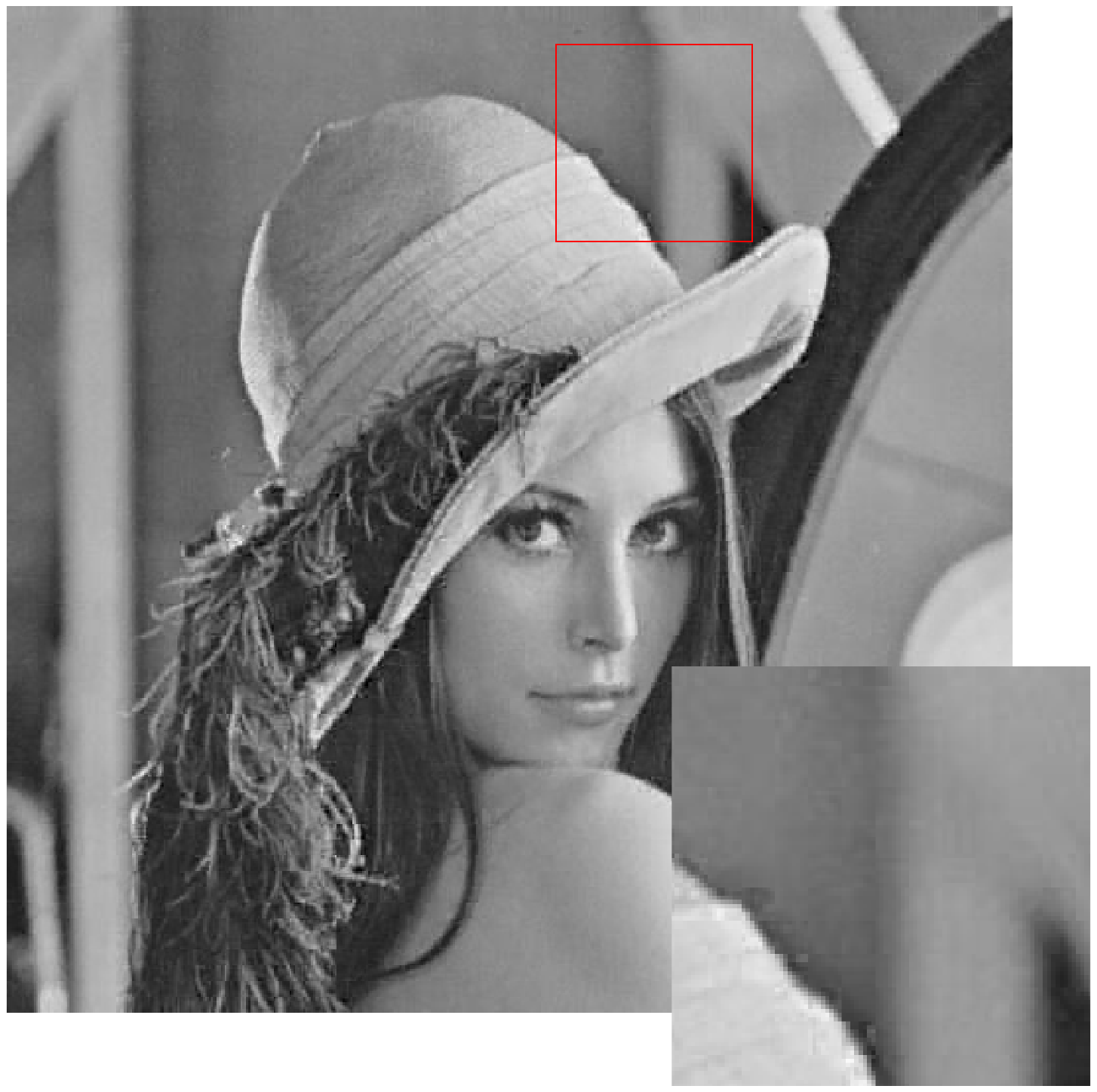}}
	\subfigure[]{\includegraphics[width=0.3\textwidth]{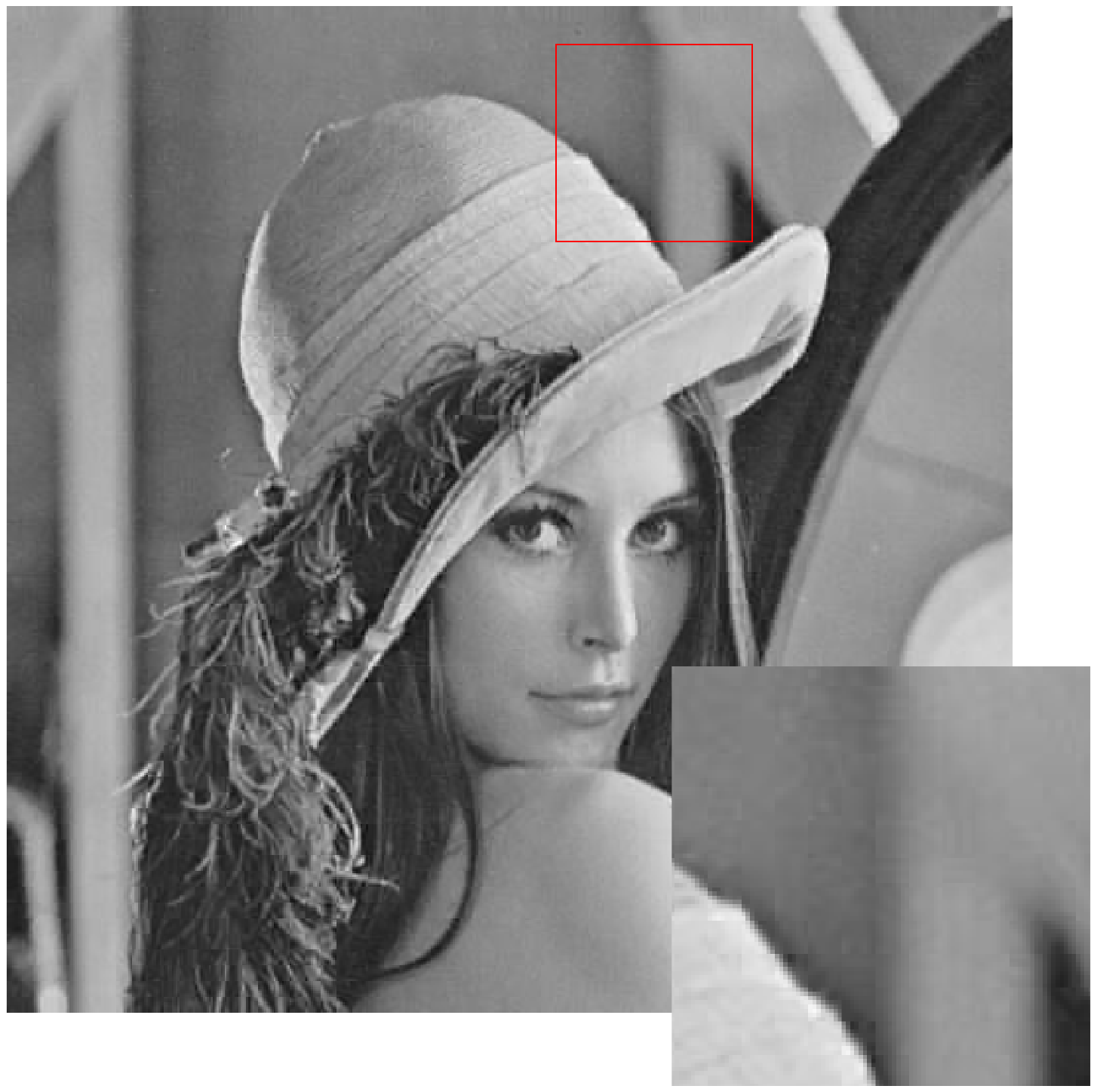}}\\
	
	\subfigure[]{\includegraphics[width=0.3\textwidth]{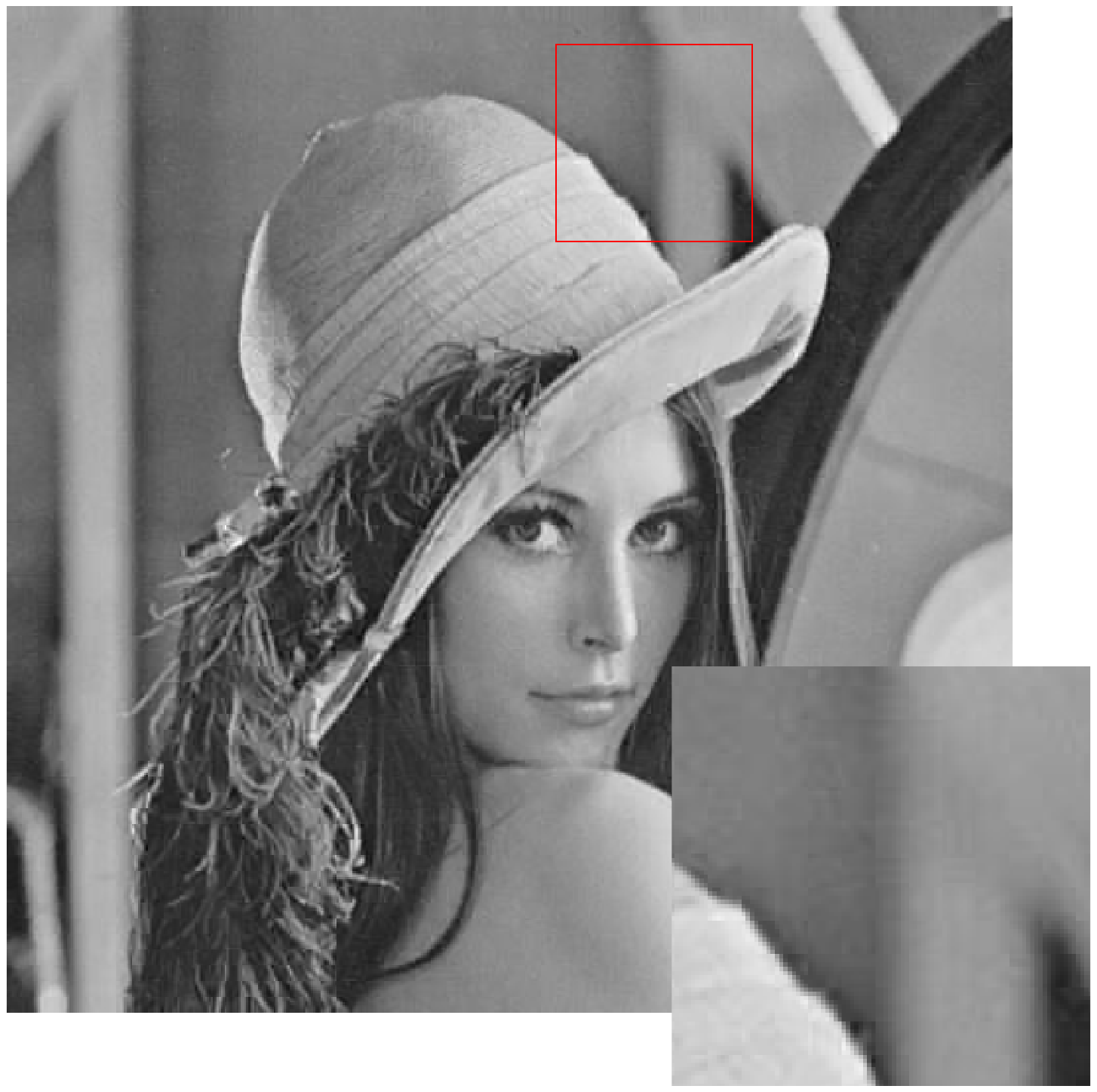}}
	\subfigure[]{\includegraphics[width=0.3\textwidth]{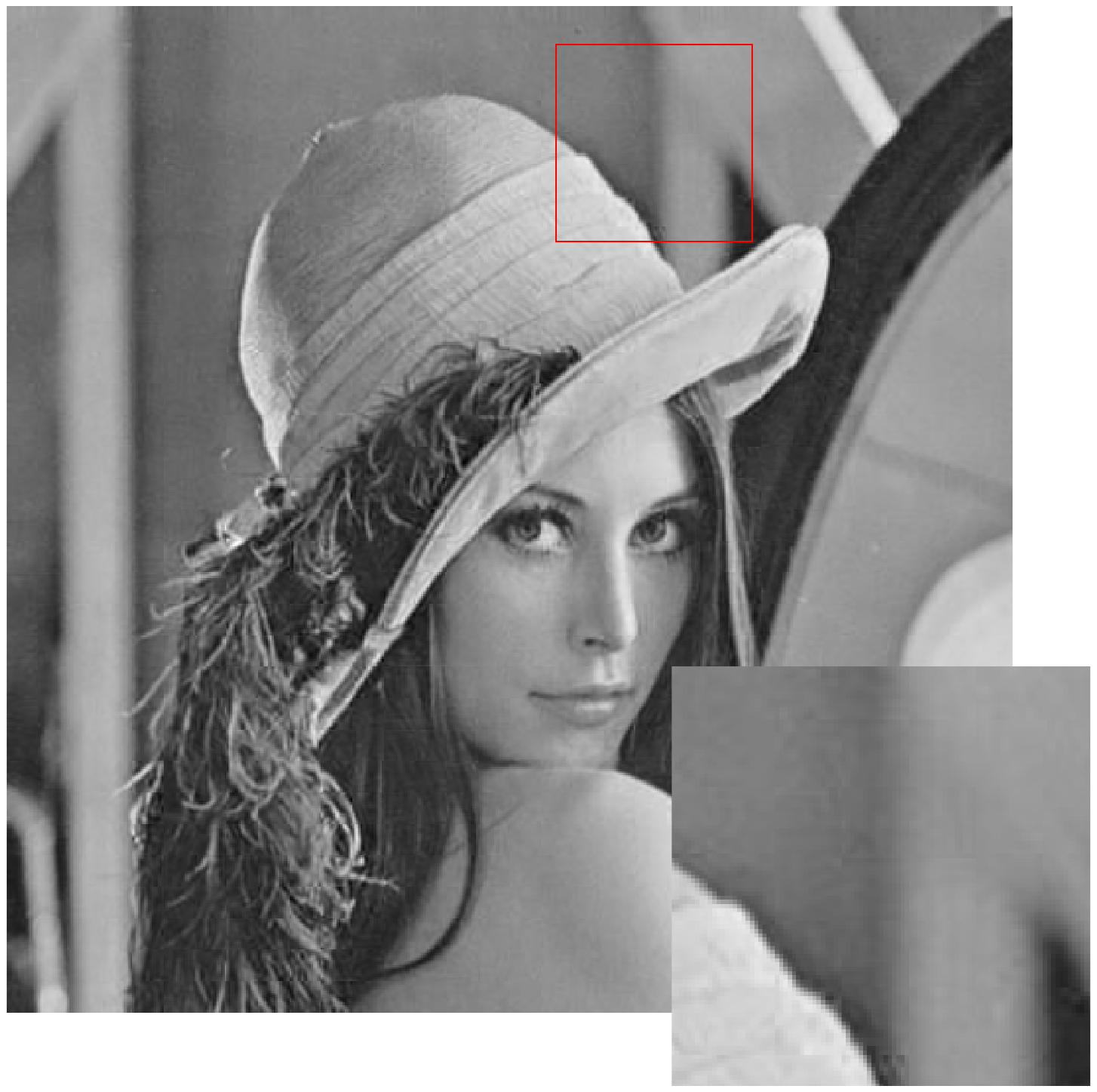}}\\
	
	\caption{‘Lena’ and its reconstructed images from the corresponding CS systems. (a)The original. (b) $CS_{randn}$ (Experiment B). (c) $CS_{DCS}$ (Experiment A). (d) $CS_{LH}$ (Experiment B). (e) $CS_{S-DCS}$ (Experiment A). (f) - (h) $CS_{MT}$ (From Experiment A to Experiment C). The corresponding $\varrho_{psnr}$ can be found from Table \ref{t_2} to Table \ref{t_4}.}\label{Lena_recoved}
\end{figure*}

\begin{figure*}
	\centering
	\subfigure[]{\includegraphics[width=0.3\textwidth]{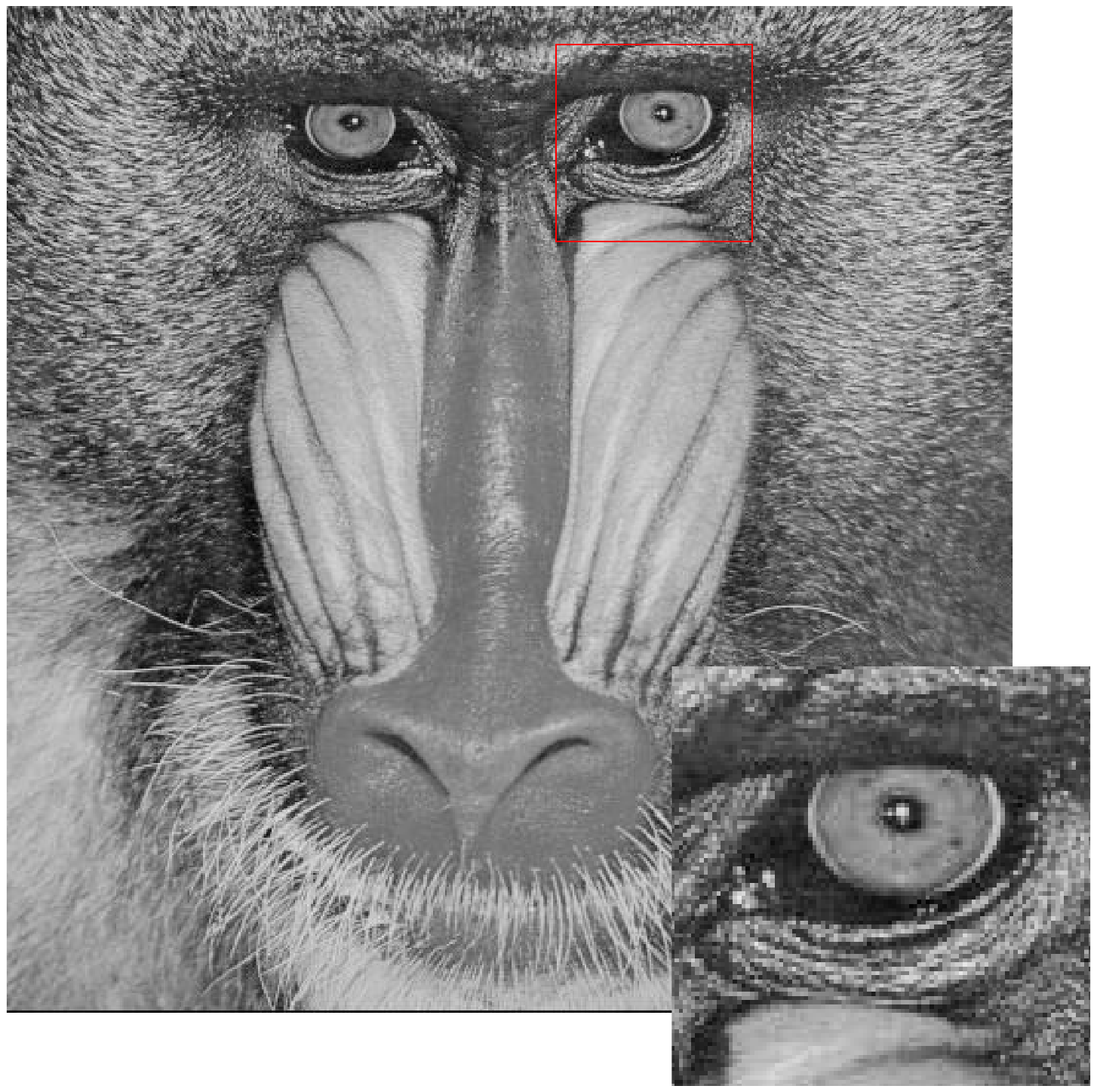}}
	\subfigure[]{\includegraphics[width=0.3\textwidth]{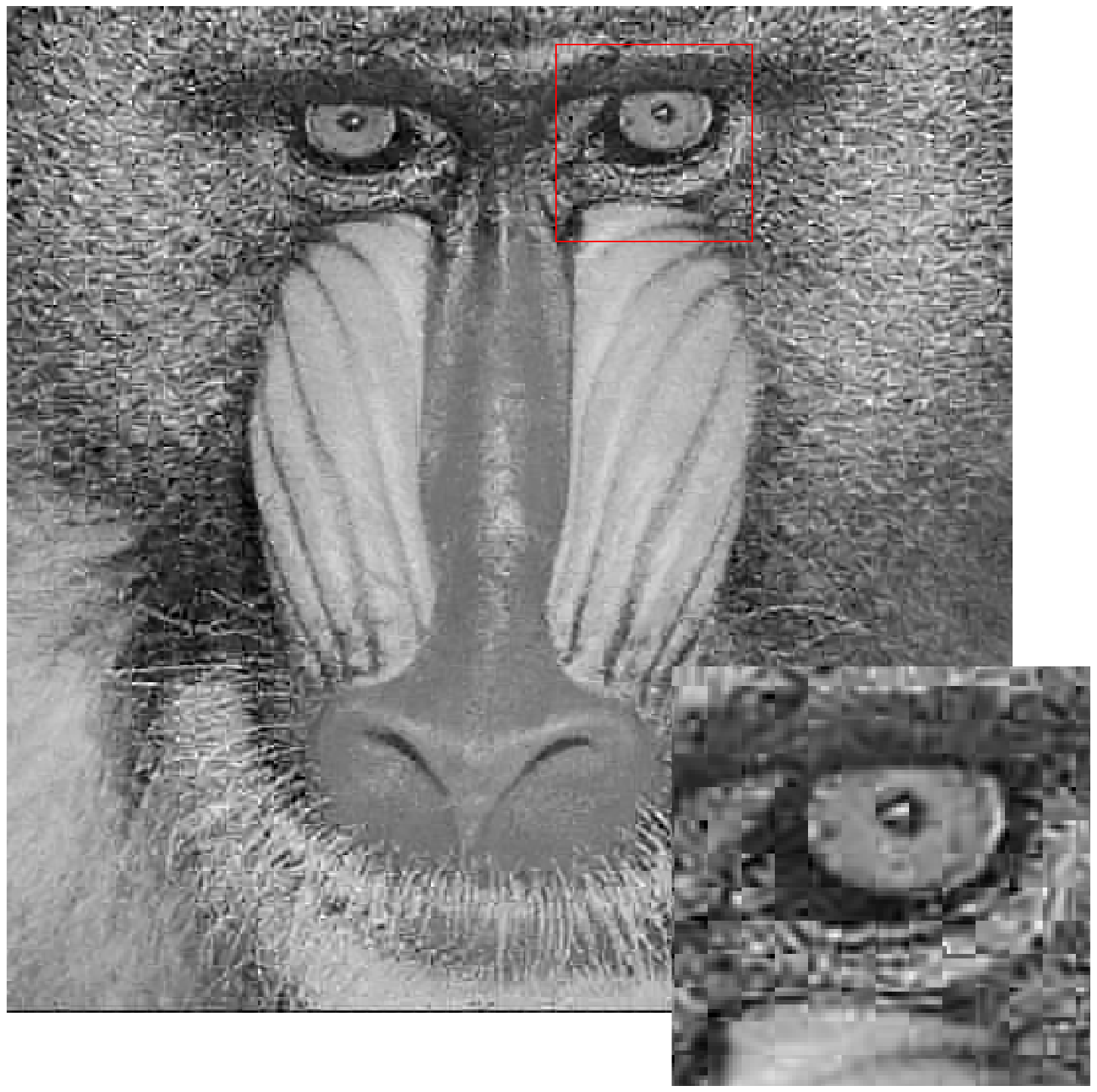}}	
	\subfigure[]{\includegraphics[width=0.3\textwidth]{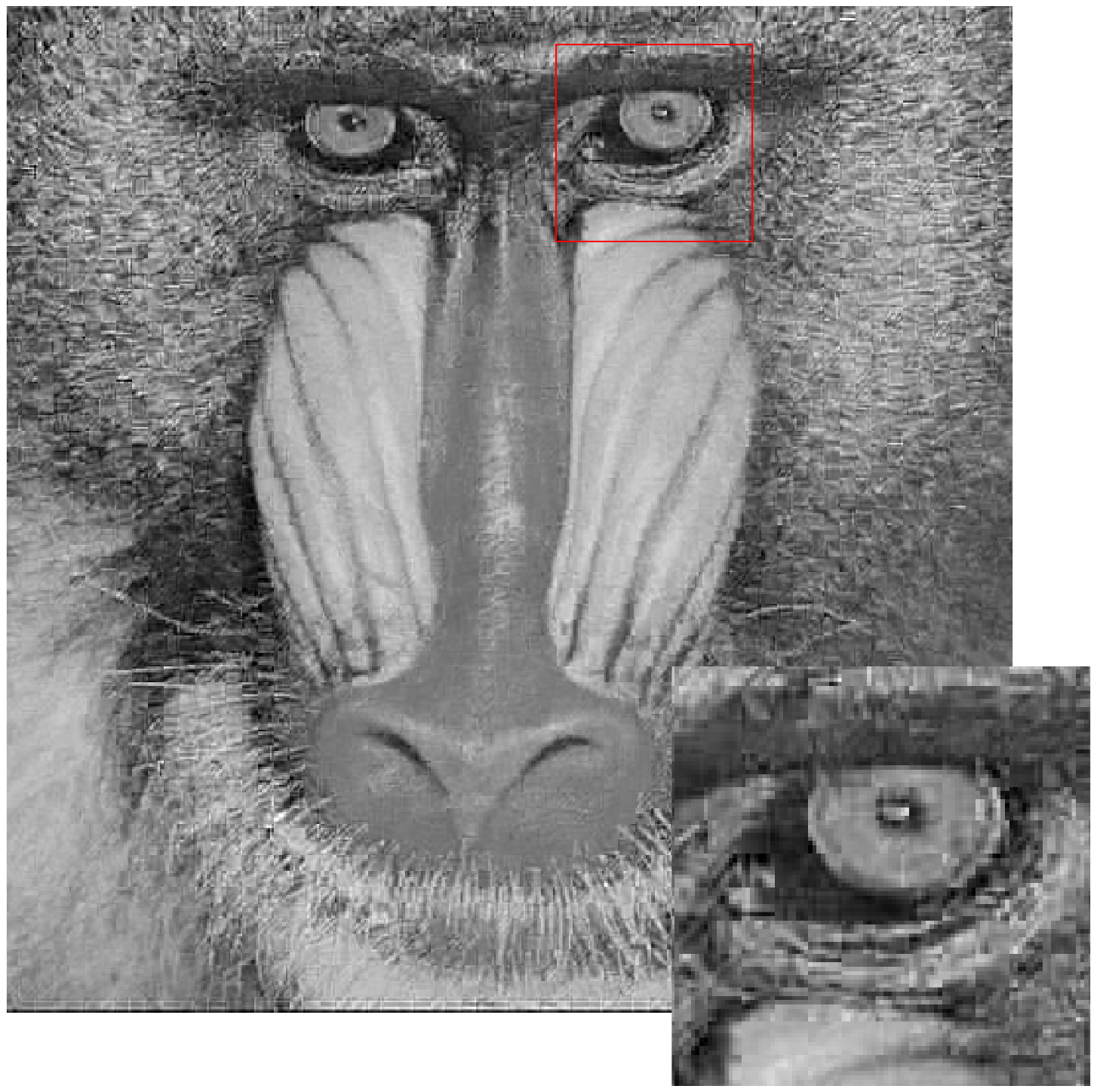}}\\
	
	\subfigure[]{\includegraphics[width=0.3\textwidth]{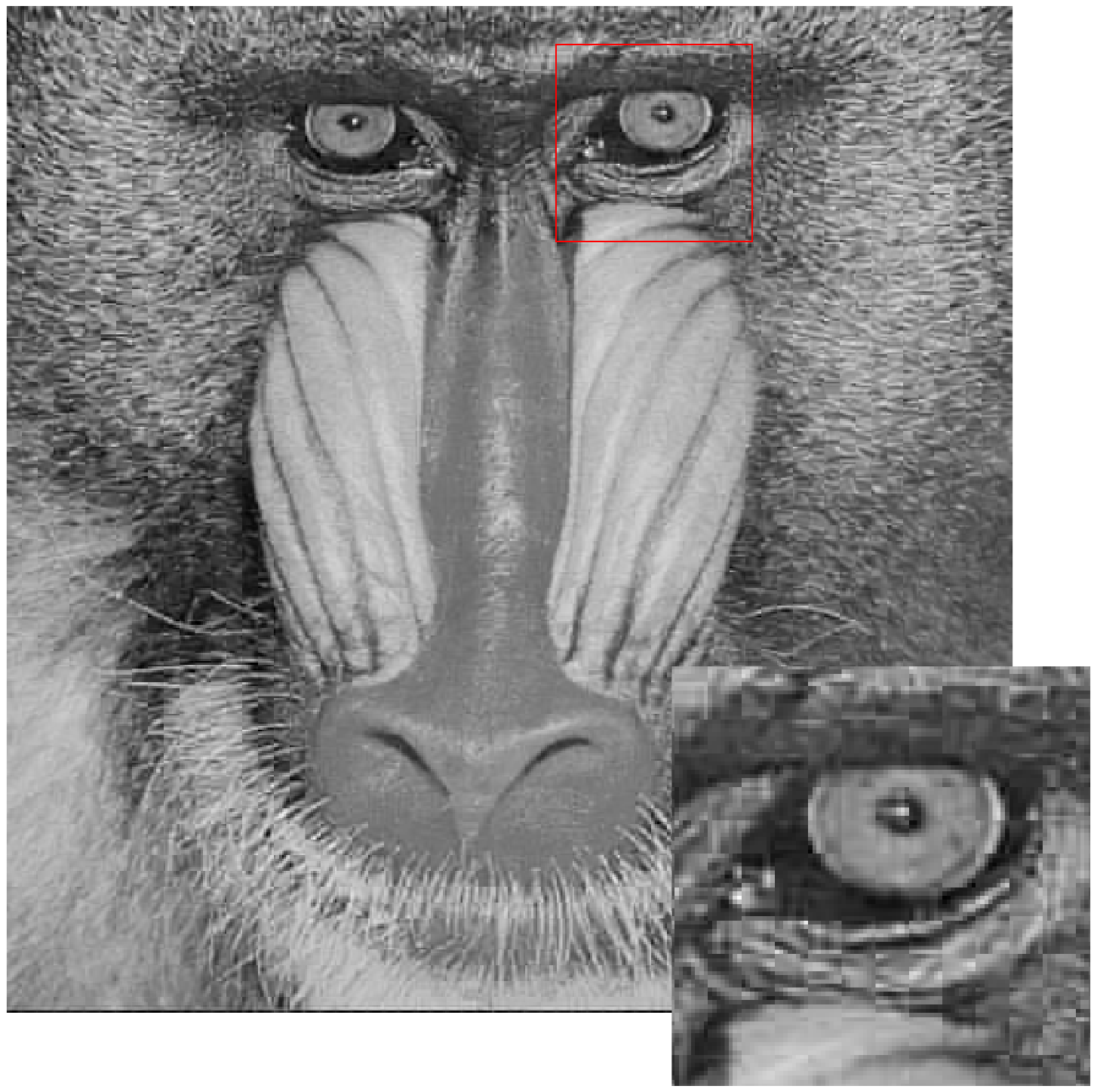}}
	\subfigure[]{\includegraphics[width=0.3\textwidth]{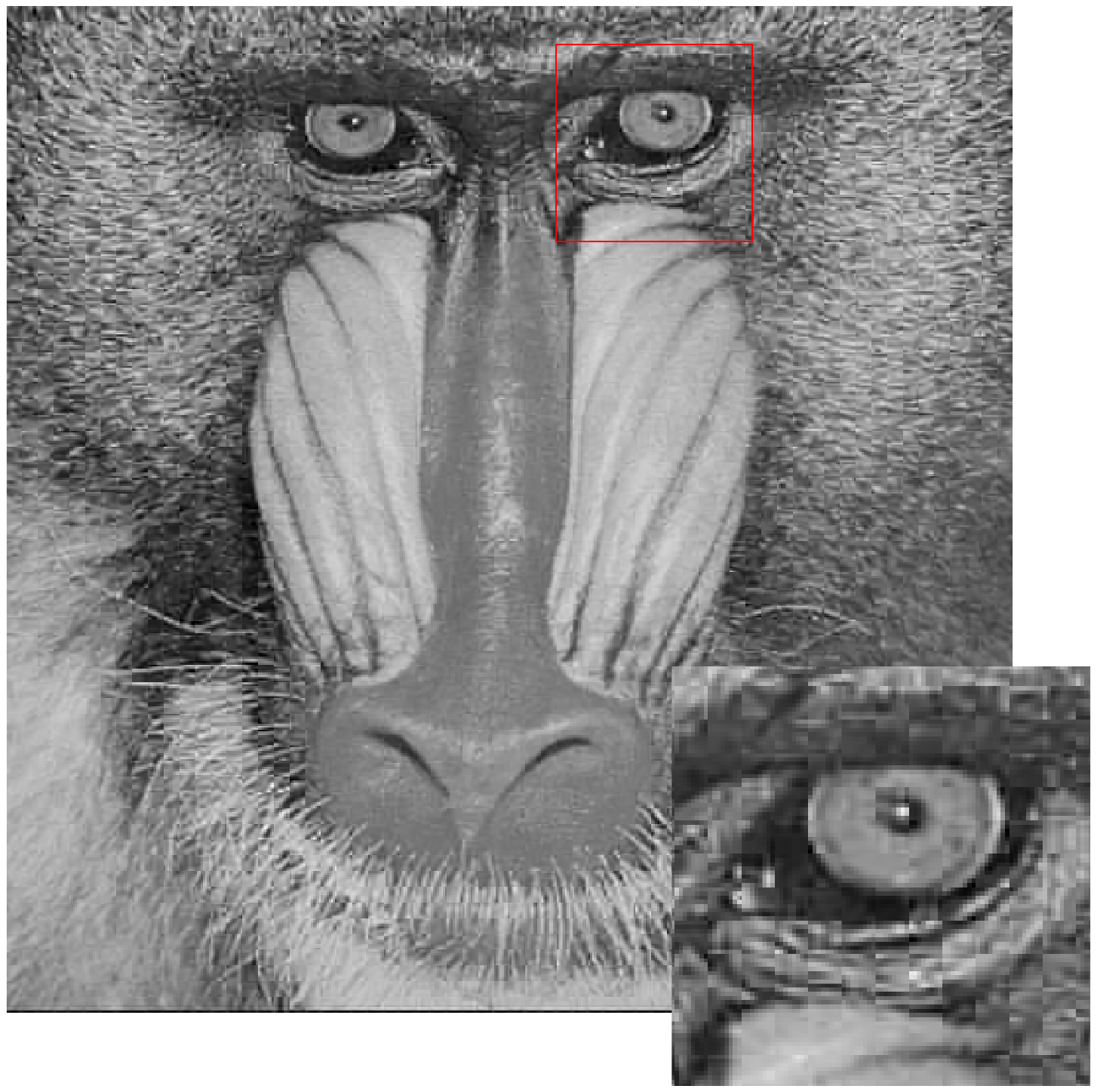}}\\
	
	\subfigure[]{\includegraphics[width=0.3\textwidth]{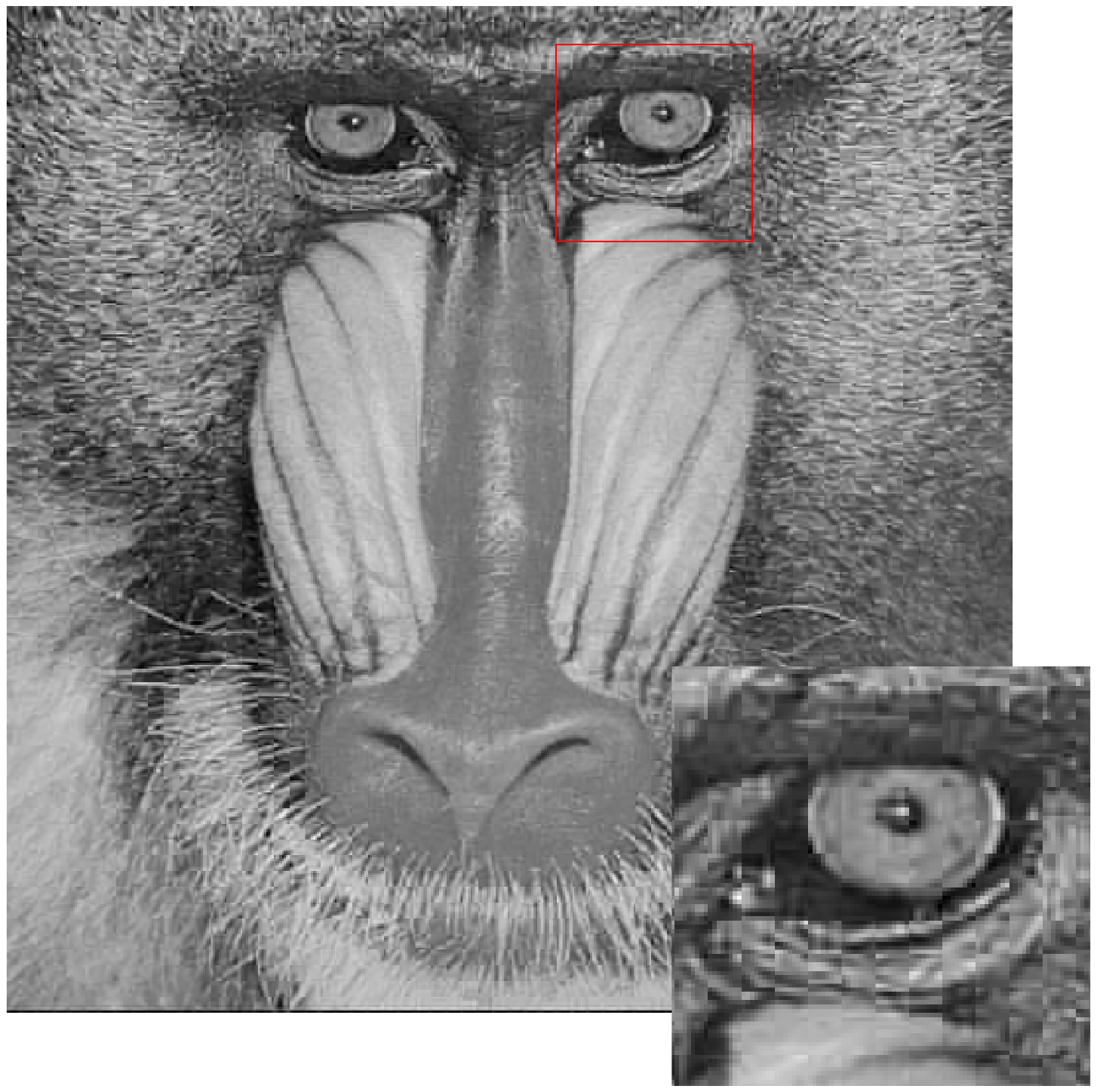}}
	\subfigure[]{\includegraphics[width=0.3\textwidth]{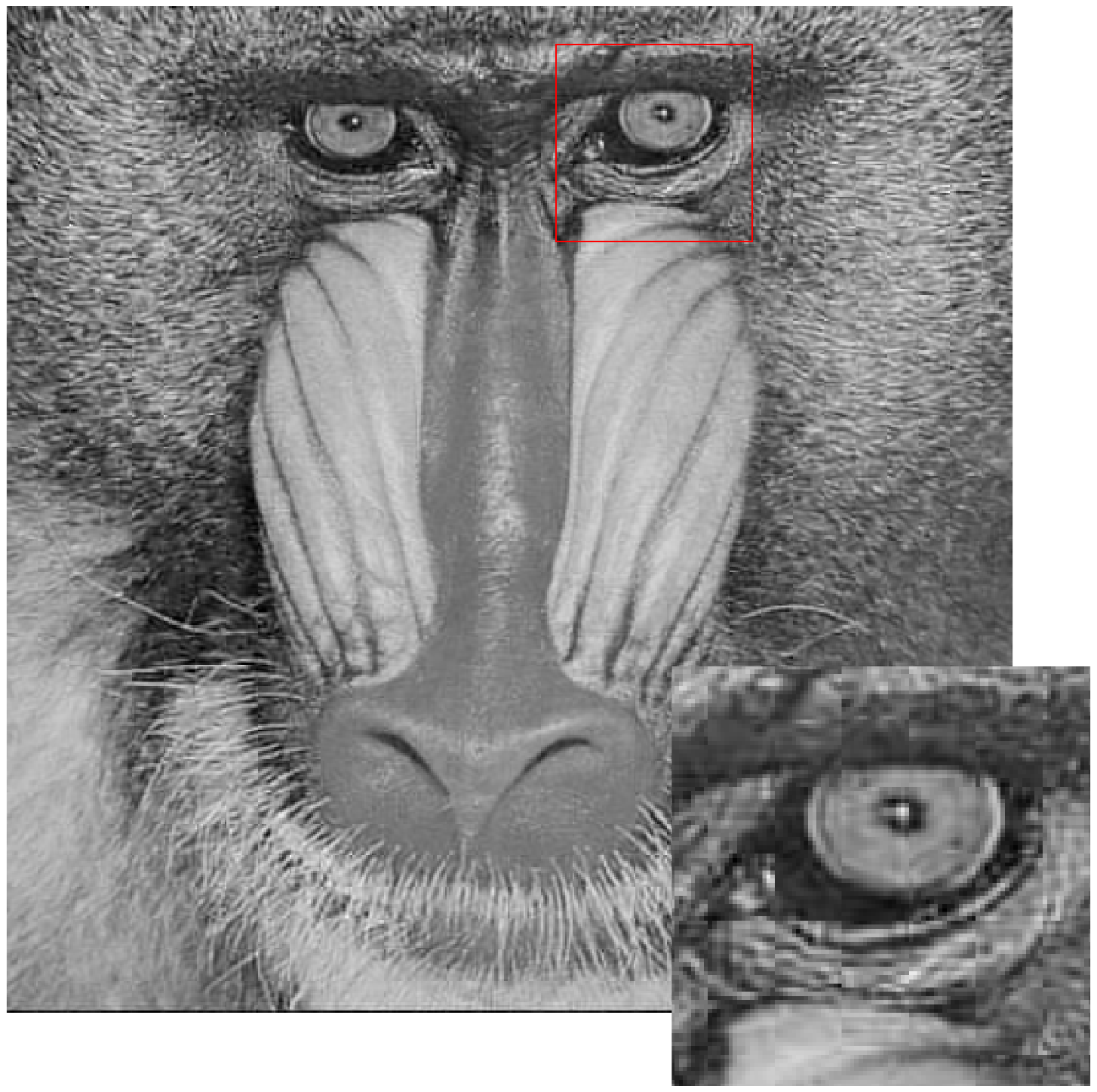}}\\
	
	\caption{‘Mandrill’ and its reconstructed images from the corresponding CS systems. (a)The original. (b) $CS_{randn}$ (Experiment B). (c) $CS_{DCS}$ (Experiment A). (d) $CS_{LH}$ (Experiment B). (e) $CS_{S-DCS}$ (Experiment A). (f) - (h) $CS_{MT}$ (From Experiment A to Experiment C). The corresponding $\varrho_{psnr}$ can be found from Table \ref{t_2} to Table \ref{t_4}.}\label{Mandrill_recoved}
\end{figure*}

\noindent \emph{Remark $5$:}
\begin{itemize}
	\item We observe that for the CS systems $CS_{randn}$ and $CS_{DCS}$,  designing the projection matrix on a high dimensional dictionary results in similar performance to  what is shown in \emph{Experiments A} and \emph{B} with the same compression rate $\frac{M}{N}$. Moreover, $CS_{DCS}$  has lower $\varrho_{psnr}$ in \emph{Experiments B} and \emph{C} than \emph{Experiment A}. However, the proposed CS system $CS_{MT}$ has increasing $\varrho_{psnr}$ from  \emph{Experiment A} to \emph{Experiment C}. This indicates the effectiveness of the proposed method for a CS system with a higher dimensional dictionary.
	
	\item We also investigate the influence of the parameters $M$, $L$ , $K$ to the above mentioned CS systems. The simulation results on \emph{Testing Data II} are given in Fig.s \ref{Vary_M_High} to \ref{Vary_L_High}. As can be observed, the proposed CS system $CS_{MT}$ has highest $\varrho_{psnr}$ among the three CS systems.
	
	\item The recent work in \cite{AJBG16} states that it is possible to train the dictionary on millions of training signals whose dimension is also more than one million. The proposed method can be utilized to design a robust projection matrix on such high dimensional dictionaries since it gets rid of the requirement of the SRE matrix $\bm E$. Note that in this case, more efforts for efficiently solving \eqref{e_9} are needed. A full investigation regarding this direction belongs to a future work. 
\end{itemize}

%

Three sets of experiments on natural images are conducted to illustrate the effectiveness and efficiency of the proposed framework in Section \ref{S_3}. A dictionary trained on a larger dataset can better represent the signal and the corresponding CS system yields better performance in terms of SRA. Additionally, a high dimensional dictionary has more freedom to represent the signals of interest. The CS system with a high dimensional dictionary and a projection matrix obtained by the proposed method results in higher $\varrho_{psnr}$. However, both cases need to train the dictionary on a large-scale training dataset, making it  inefficient or even impossible for computing the SRE matrix $\bm E$. One of the main contributions in this paper is proposing a new framework that is independent of the SRE matrix $\bm E$.

\begin{figure}[htb!]
	\centering
	\includegraphics[width=0.5\textwidth]{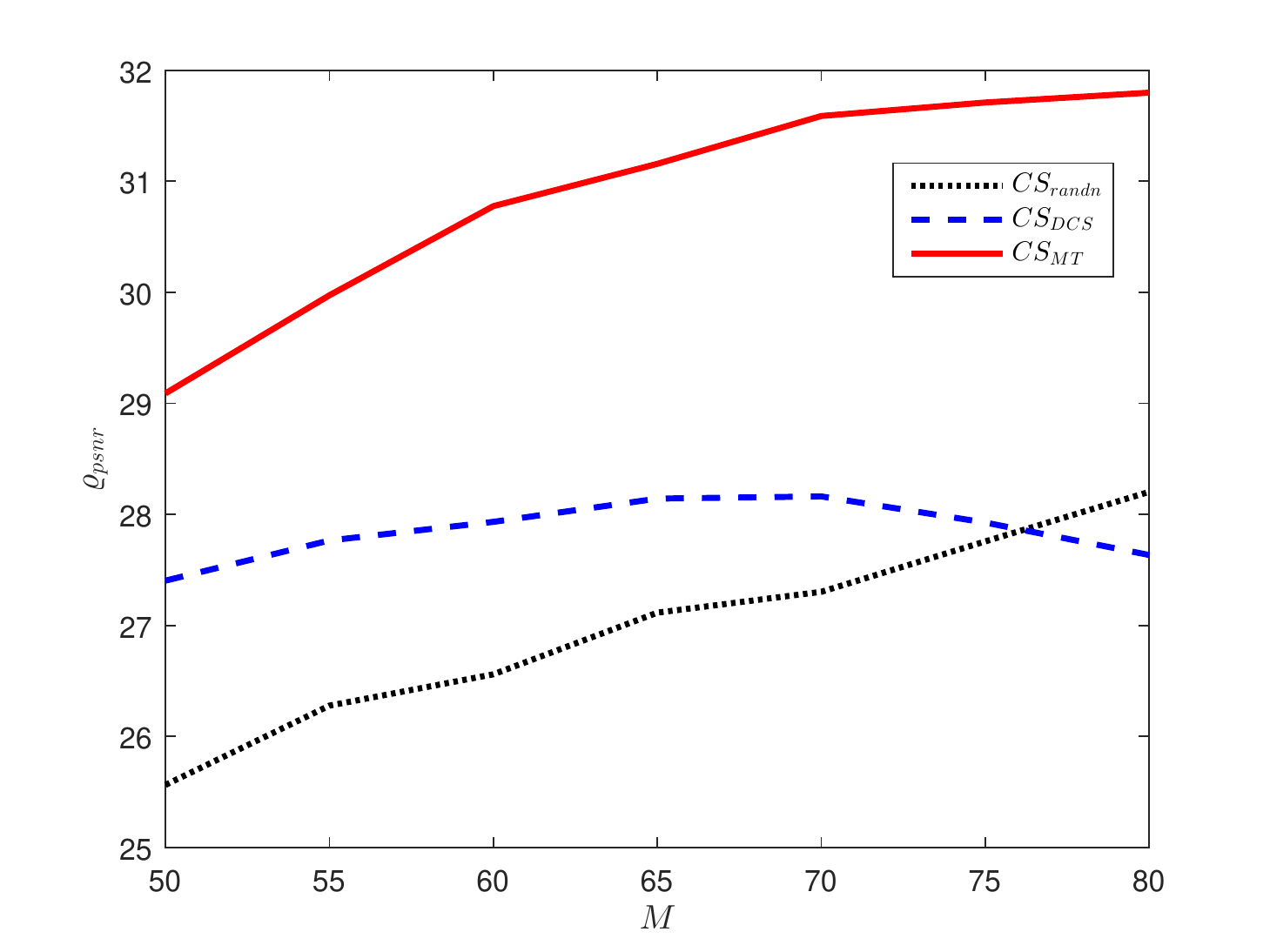}
	\caption{The statistic $\varrho_{psnr}$ versus the varying of $M$ on \emph{Testing Data II} for a fixed $N=256$, $L=800$ and $K = 16$.}\label{Vary_M_High}
\end{figure}
\begin{figure}[htb!]
	\centering
	\includegraphics[width=0.5\textwidth]{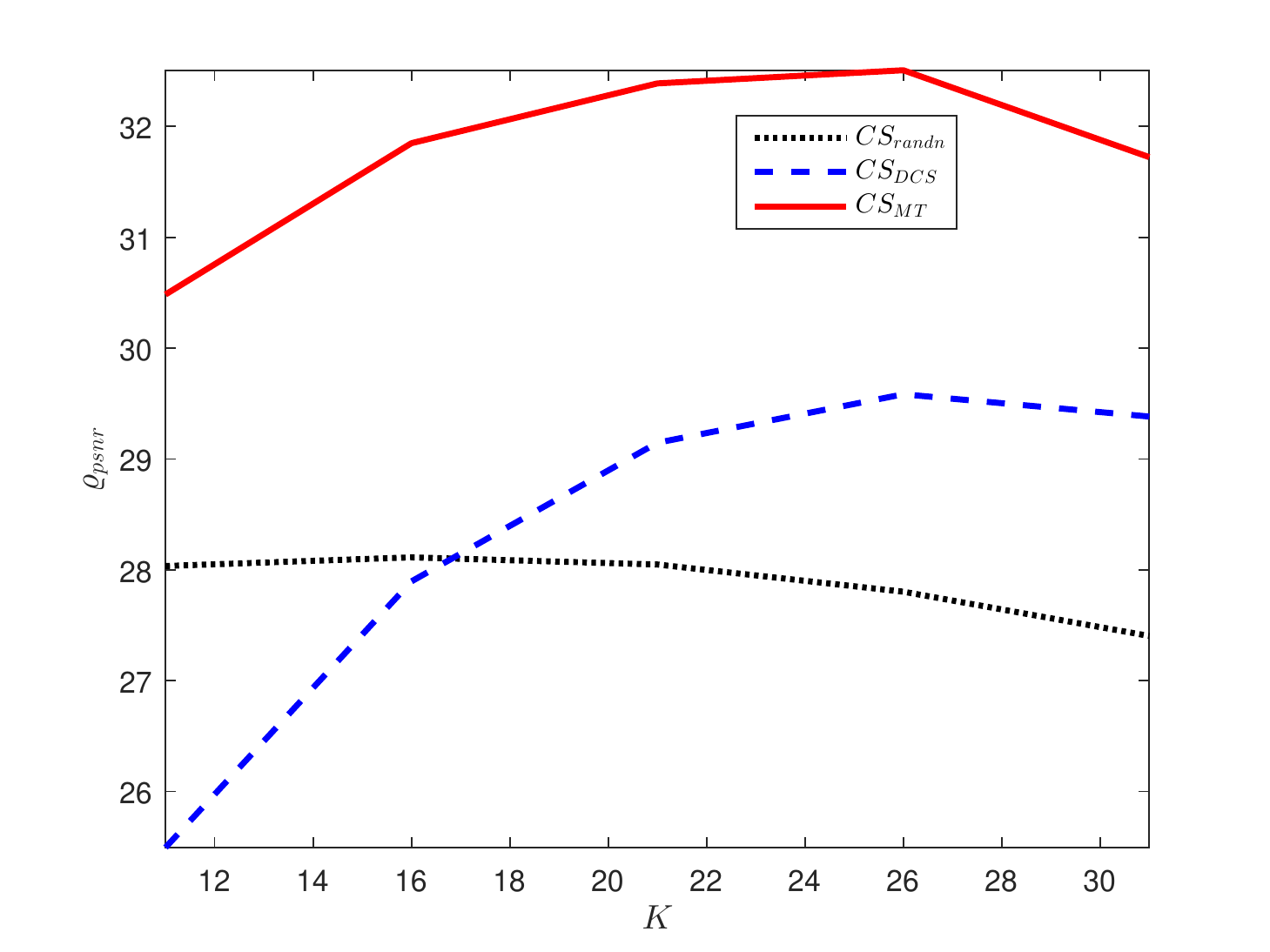}
	\caption{The statistic $\varrho_{psnr}$ versus the varying of $K$ on \emph{Testing Data II} for a fixed $N = 256$, $M=80$ and $L=800$.}\label{Vary_K_High}
\end{figure}
\begin{figure}[htb!]
	\centering
	\includegraphics[width=0.5\textwidth]{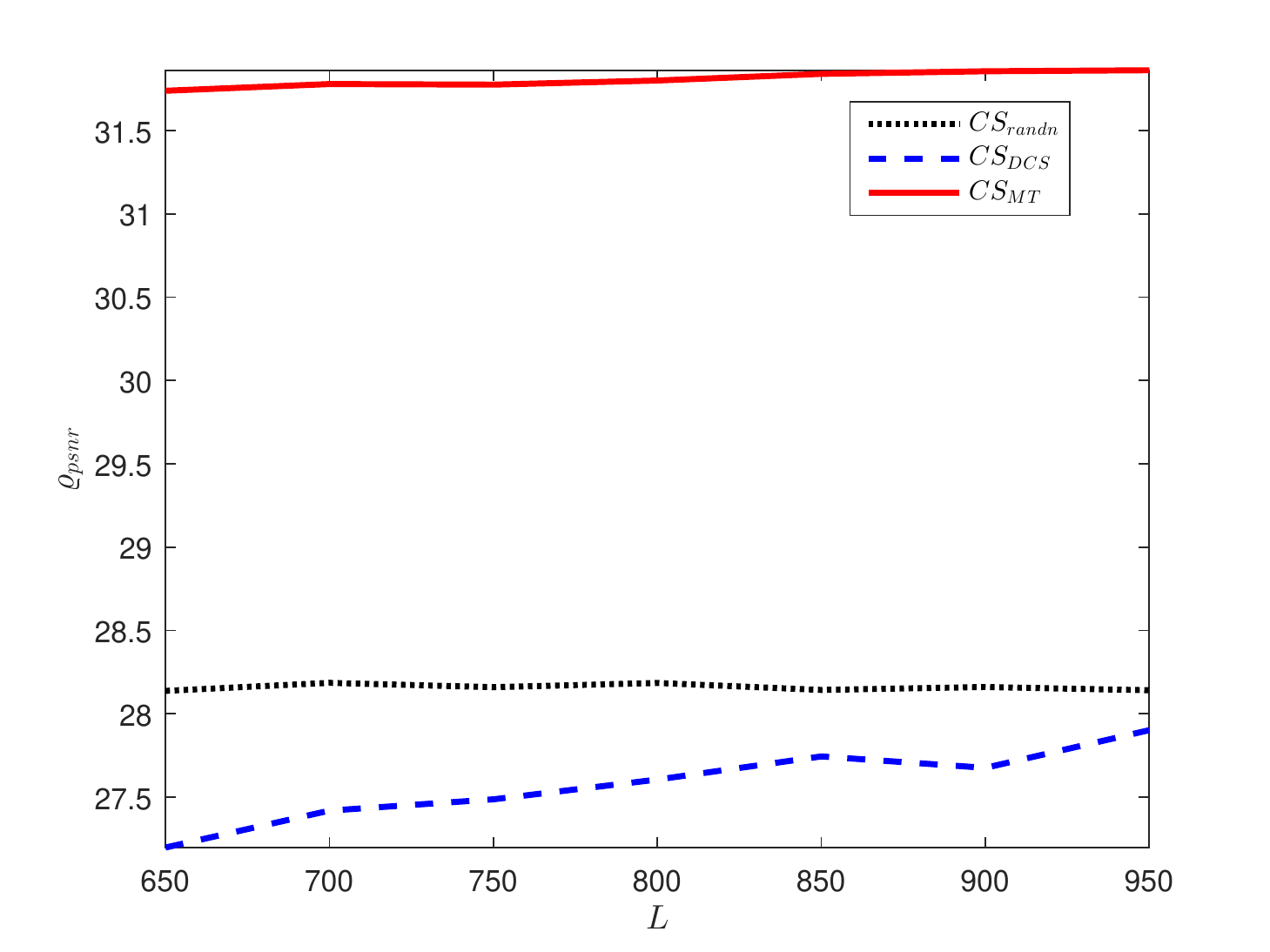}
	\caption{The statistic $\varrho_{psnr}$ versus the varying of $L$ on \emph{Testing Data II} for a fixed $N = 256$, $M=80$ and $K=16$.}\label{Vary_L_High}
\end{figure}

\section{Conclusion}\label{S_5}

{This paper considers the problem of designing a robust projection matrix for the signals that are not exactly sparse. A novel cost function is proposed to decrease the influence of SRE for the measurements and at the same time is independent of training data and the corresponding SRE matrix (the independence of training data saves computations for practical designing). As shown in {\bf Lemma \ref{Lemma:omit_E}}, we state that discarding the SRE matrix in designing procedure is reasonable as it is equivalent to the case when we have infinite number of training samples. We thus utilize $\|\bm \Phi\|_F^2$ as an surrogate to the projected SRE $\|\bm \Phi \bm E\|_F^2$ to design the sensing matrices. The performance of designing projection matrices with dictionaries either learned on large-scale training dataset or of high dimension is experimentally examined. The simulation results on synthetic data and natural images demonstrate the effectiveness and efficiency of the proposed approach. It is of interest to note that the proposed method yields better performance when we increase the dimension of the dictionary, which surprisingly is not true for the other methods.
	
Our proposed framework for designing robust sensing matrices---which shares similar structure to that in \cite{LLLBJH15} \cite{HBLZ16}---simultaneously minimizes the surrogate of sparse representation error (SRE) and the mutual coherence of the CS systems. Thus we need extract effort (like Figure \ref{sythetic_choice_lambda}) to find an optimal $\lambda$ that well balances these two terms. An ongoing research is to come up with a new framework without requiring balancing the tradeoff between minimizing the mutual coherence and decreasing the projected SRE.
}

\section*{Acknowledgment}
This research is supported in part by ERC Grant agreement no. $320649$, and in part by the Intel Collaborative Research Institute for Computational Intelligence (ICRI-CI). The code in this paper to represent the experiments can be downloaded through the link \protect \url{https://github.com/happyhongt/}



\end{document}